\def\eqref#1{equation~\ref{#1}}
\def\1{\bm{1}}
\def\rvh{{\mathbf{h}}}
\def\rvr{{\mathbf{r}}}
\def\rvz{{\mathbf{z}}}
\def\rmA{{\mathbf{A}}}
\DeclareMathAlphabet{\mathsfit}{\encodingdefault}{\sfdefault}{m}{sl}
\SetMathAlphabet{\mathsfit}{bold}{\encodingdefault}{\sfdefault}{bx}{n}
\useunder{\uline}{\ul}{}
\theoremstyle{plain}
\newtheorem{proposition}{Proposition}
\theoremstyle{definition}
\newtheorem{definition}{Definition}
\theoremstyle{remark}
\newcommand{\ti}[1]{\textit{#1}}
\newcommand{\tb}[1]{\textbf{#1}}
\newcommand{\mcal}[1]{\mathcal{#1}}
\newcommand{\patharrow}{\,\ensuremath{\to}\,} % thin space on both sides
\newcommand{\proj}{\texttt{\scalebox{1.03}{R}APL}}
\title{Learning Efficient and Generalizable Graph Retriever for Knowledge-Graph Question Answering}
\author{%
  Tianjun Yao$^{1}$ \quad
  Haoxuan Li$^{1,2}$ \quad
  Zhiqiang Shen$^{1}$ \quad
  Pan Li$^{3}$ \quad
  \tb{Tongliang Liu}$^{4,1}$ \quad
  \tb{Kun Zhang}$^{1,5}$ \\[0.3em]
  $^{1}$Mohamed bin Zayed University of Artificial Intelligence \\
  $^{2}$Peking university \quad
  $^{3}$Georgia Institute of Technology \\
  $^{4}$The University of Sydney \quad
  $^{5}$Carnegie Mellon University \\
  \texttt{\{tianjun.yao,haoxuan.li,zhiqiang.shen\}@mbzuai.ac.ae}\\
  \texttt{pan.li@gatech.edu}, 
  \texttt{tongliang.liu@sydney.edu.au}, \texttt{kun.zhang@mbzuai.ac.ae} \\
}
\begin{document}

\maketitle

\begin{abstract}
  Large Language Models~(LLMs) have shown strong inductive reasoning ability across various domains, but their reliability is hindered by the outdated knowledge and hallucinations. Retrieval-Augmented Generation mitigates these issues by grounding LLMs with external knowledge; however, most existing RAG pipelines rely on unstructured text, limiting interpretability and structured reasoning. Knowledge graphs, which represent facts as relational triples, offer a more structured and compact alternative. Recent studies have explored integrating knowledge graphs with LLMs for knowledge graph question answering~(KGQA), with a significant proportion adopting the \ti{retrieve-then-reasoning} paradigm. In this framework, graph-based retrievers have demonstrated strong empirical performance, yet they still face challenges in generalization ability. In this work, we propose \proj, a novel framework for efficient and effective graph retrieval in KGQA. \proj{} addresses these limitations through three aspects: \ding{182} a two-stage labeling strategy that combines heuristic signals with parametric models to provide causally grounded supervision; \ding{183} a model-agnostic graph transformation approach to capture both intra- and inter-triple interactions, thereby enhancing representational capacity; and \ding{184} a path-based reasoning strategy that facilitates learning from the injected rational knowledge, and supports downstream reasoner through structured inputs. Empirically, \proj{} outperforms state-of-the-art methods by $2.66\%-20.34\%$, and significantly reduces the performance gap between smaller and more powerful LLM-based reasoners, as well as the gap under cross-dataset settings, highlighting its superior retrieval capability and generalizability. Codes are available at: \url{https://github.com/tianyao-aka/RAPL}.
\end{abstract}

\section{Introduction}
\label{sec:intro}
Large Language Models~(LLMs)~\cite{brown2020language,achiam2023gpt,touvron2023llama} have demonstrated remarkable capabilities in complex reasoning tasks across various domains~\cite{wu2024survey,fan2024hardmath,manning2024automated}, marking a significant step toward bridging the gap between human cognition and artificial general intelligence (AGI)~\cite{huang2023towards,wei2022chain,yao2024tree,bubeck2023sparks}. However, the reliability of LLMs remains a pressing concern due to outdated knowledge~\cite{kasai2023realtime} and hallucination~\cite{10.1145/3571730,huang2023a}. These issues severely undermine their trustworthiness in knowledge-intensive applications. 

To mitigate these deficiencies, Retrieval-Augmented Generation~(RAG)~\cite{gao2024retrievalaugmented,lewis2020retrieval} has been introduced to ground LLMs with external knowledge. While effective, most existing RAG pipelines rely on unstructured text corpora, which are often noisy, redundant, and semantically diffuse~\cite{shuster-etal-2021-retrieval-augmentation,gao2024retrievalaugmented}. In contrast, Knowledge Graph~(KG)~\cite{hogan2021knowledge} organizes information as structured triples $(h, r, t)$, providing a compact and semantically rich representation of real-world facts~\cite{chein2008graph,robinson2015graph}. As a result, incorporating KGs into RAG frameworks (i.e., KG-based RAG) has emerged as a vibrant and evolving area for achieving faithful and interpretable reasoning.

Building upon the KG-based RAG frameworks, recent studies have proposed methods that combine KGs with LLMs for Knowledge Graph Question Answering~(KGQA)~\cite{wang2023knowledge,dehghan-etal-2024-ewek,mavromatis2022rearev,luo2024rog,mavromatis2024gnn,chen2024plan,li2024simple}. A prevalent approach among these methods is the \ti{retrieve-then-reasoning} paradigm, where a retriever first extracts relevant knowledge from the KG, and subsequently an LLM-based reasoner generates answers based on the retrieved information. The retriever can be roughly categorized into LM-based retriever and graph-based retriever. Notably, recent studies have demonstrated that graph neural network~(GNN~\cite{kipf2016semi,hamilton2017inductive,velickovic2017graph,xu2018how})-based graph retrievers can achieve superior performance in KGQA tasks, even without fine-tuning the LLM reasoner~\cite{mavromatis2022rearev,li2024simple}. The success can be mainly attributed to the following factors: \ding{182} Unlike LLM-based retrievers, GNN-based retrievers perform inference directly on the KG, inherently mitigating hallucinations by grounding retrieval in the graph. \ding{183} GNNs are able to leverage the relational information within KGs, enabling the retrieval of contextually relevant triples that are crucial for accurate reasoning. Despite these success, we argue that current graph-based retrievers still face challenges that limit their effectiveness. Specifically, \ding{182} high-quality supervision is essential for machine learning models to generalize well. However, existing methods often rely on heuristic-based labeling strategies, such as identifying the shortest path between entities~\cite{luo2024rog,mavromatis2024gnn,li2024simple}. While seemingly reasonable, this approach can introduce noise and irrationalities. Specifically, \tb{(i)} multiple shortest paths may exist for a given question, not all of which are rational, and \tb{(ii)} some reasoning paths may not be the shortest ones. We provide 6 examples for each case in Appendix~~\ref{app:label-rationale} to support our claim. \ding{183} Although GNN-based retrievers inherently mitigate hallucinations, they may struggle to generalize to unseen questions, as they are not explicitly tailored to the unique characteristics of KGs and KGQA task, leading to \ti{limited generalization capacity}. Motivated by these challenges, we pose the following research question:

\ti{How to develop an efficient graph-based retriever that generalizes well for KGQA tasks?}

 To this end, we propose \proj, a novel framework that enhances the generalization ability of graph retrievers with \tb{R}ationalized \tb{A}nnotator, \tb{P}ath-based reasoning, and \tb{L}ine graph transformation.
 Specifically, \ding{182} instead of relying solely on heuristic-based labeling approaches such as shortest-path heuristics, we propose a two-stage labeling strategy. First, a heuristic-based method is employed to identify a candidate set of paths that are more likely to include rational reasoning paths. Then we obtain the causally grounded reasoning paths from this candidate set by leveraging the inductive reasoning ability of LLMs. \ding{183} \proj{} further improves the generalizability of the graph retriever via line graph transformation. This approach is model-agnostic and enriches triple-level representations by capturing both intra- and inter-triple interactions. Furthermore, it naturally supports path-based reasoning due to its directionality-preserving nature. \ding{184} We further introduce a path-based learning and inference strategy that enables the model to absorb the injected rational knowledge, thereby enhancing its generalizability. Additionally, the path-formatted outputs benefit the downstream reasoner by providing structured and organized inputs. In practice, our method outperforms previous state-of-the-art approaches by $2.66\%-20.34\%$ when paired with LLM reasoners of moderate parameter scale. Moreover, it narrows the performance gap between smaller LLMs (e.g., Llama3.1–8B) and more powerful models (e.g., GPT‑4o), as well as the gap under cross-dataset settings, highlighting its strong retrieval capability and generalization performance. 
 \vspace{-10pt}
\section{Preliminary}
% \noindent\tb{Entity $e$.}
% An entity $e$ represents a real-world object, concept, or event in a KG, denoted by $e$, where $e \in \mathcal{E}$, and $\mathcal{E}$ denotes the set of all entities.

% \noindent\tb{Relation $r$.} A relation $r$ denotes a semantic connection or association between two entities in the knowledge graph, where $r \in \mcal{R}$, and $\mathcal{R}$ denotes the set of relations in the KG.
\noindent\tb{triple $\tau$.}
A triple represents a factual statement:
$
\tau = \langle e, r, e' \rangle,
$
where $e, e' \in \mcal{E}$ denote the subject and object entities, respectively, and $r \in \mcal{R}$ represents the relation linking these entities.

\noindent\tb{Reasoning Path $p$.}
A reasoning path $p:=e_0 \xrightarrow{r_1} e_1 \xrightarrow{r_2} \cdots \xrightarrow{r_k} e_k$ connects a source entity to a target entity through one or more intermediate entities. Moreover, we denote $z_p:=\{r_1,r_2,\cdots r_k\}$ as the relation path of $p$.

\noindent\tb{Problem setup.} Given a natural language question $q$ and a knowledge graph $\mcal{G}$, our goal in this study is to learn a function $f_\theta$ that takes as inputs the question entity $e_q$, and a subgraph $\mcal{G}_q \subset \mcal{G}$, to infer an answer entity $e_a \in \mcal{G}_q$. Following previous practice, we assume that $e_q$ are correctly identified and linked in $\mcal{G}_q$.

\section{Related Work}
We discuss the relevant literature on the retrieval-then-reasoning paradigm and knowledge graph-based agentic RAG in detail in Appendix~\ref{related_work}.
 \begin{figure}[!t]  % 't' means top of page; you can also use 'h', 'b', etc.
  \centering
  \includegraphics[width=0.85\linewidth]{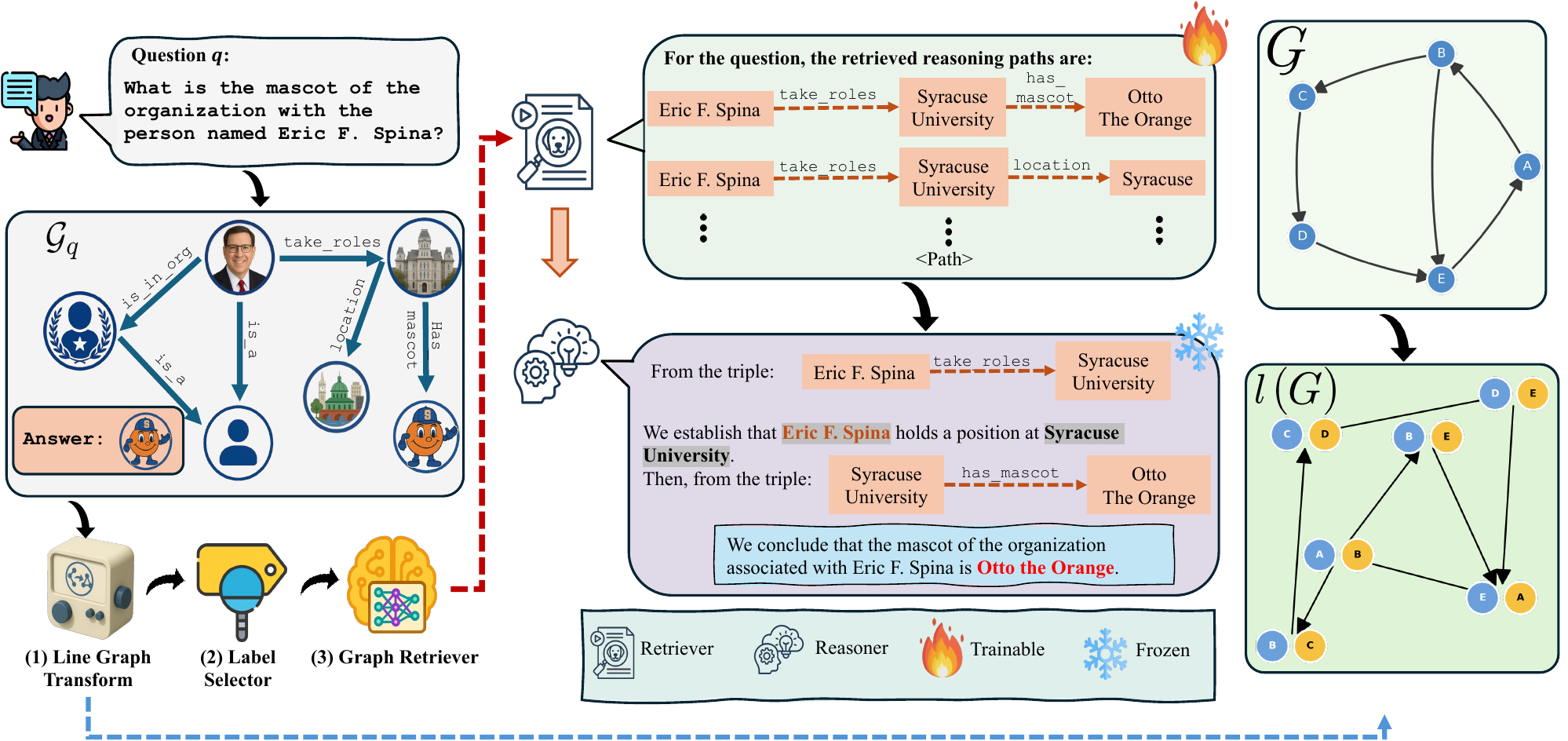}
  \caption{Overall framework of \proj. The generalization ability of \proj{} arises from the label rationalizer, line graph transformation, and the path-based reasoning paradigm.}
  \label{fig:model}
  \vspace{-8pt}
\end{figure}
\section{Empowering Graph Retrievers with Enhanced Generalization for KGQA} 
In this section, we discuss how to design an efficient and generalizable graph retriever from various aspects. The overall procedure is illustrated in Figure~\ref{fig:model}. Complexity analysis is deferred to Appendix~\ref{app:complexity}.

\subsection{Rationalized Label Supervision}

High-quality label supervision is crucial for a machine learning model to generalize. In KGQA, we define a labeling function $h(\cdot)$ such that $\widetilde{\mcal{Y}}_q = h(e_q, e_a, \mcal{G}_q)$, where $\widetilde{\mcal{Y}}_q$ represents the predicted paths or triples serving as the supervision signal. 

Previous studies often assume a heuristic-based labeling function $h$ that designates the shortest paths between $e_q$ and $e_a$ as the teacher signal. However, as discussed in Section~\ref{sec:intro}, a shortest path may not be a rational path, employing these noisy signals can undermine the generalization ability of the retrieval model. We show an example to demonstrate this point.

\tb{Example.} For the question: \texttt{What movie with film character named Woodson did Tupac star in?} Two candidate paths are: \ding{182}
$\textit{Tupac}  \xrightarrow[]{\texttt{film.actor}} \texttt{m.0jz0c4} \xrightarrow[]{\texttt{film.performance}} \textit{Gridlock’d}$, and  \ding{183}
$\textit{Tupac}  \xrightarrow[]{\texttt{music.recording}} \texttt{m.0jz0c4} \xrightarrow[]{\texttt{recording.release}} \textit{Gridlock’d}$.
The first path accurately captures the \ti{actor-character-film} reasoning chain, thereby offering a rational supervision signal. In contrast, the second path primarily reflects musical associations. Despite correctly linking the question entity to the answer entity, it fails to address the actual question intent. Learning from rational paths enhances the model’s ability to generalize to similar question types. More broadly, injecting causally relevant knowledge into the retrieval model contributes to improved generalization across diverse reasoning scenarios.

To inject the rational knowledge into the graph retriever, we incorporate a LLM-based annotator in the labeling function, reformulating it as:
\begin{equation}
\widetilde{\mcal{Y}}_q = h_{LM}(q, e_q, e_a, \mcal{G}_q, \gamma),
\end{equation}
where $\gamma$ denotes the LM-based reasoner. Specifically, our approach proceeds as follows. First, we generate a set of candidate paths $\mcal{P}_{cand}$, with the path length constrained to the range $[d_{min}, d_{min}+2]$, where $d_{min}$ is the shortest path distance from $e_q$ to $e_a$. Then, given the question $q$ and the candidate path set $\mcal{P}_{cand}$, we obtain the rational paths through $\widetilde{\mcal{Y}}_q = \gamma(q, \mcal{P}_{cand})$.
The labeling function $h_{LM}(\cdot)$ injects rational knowledge and provides more causally grounded supervision for training the graph retriever, thereby enhancing its generalization ability to unseen questions.

\subsection{Model-Agnostic Graph Transformation}
In this section, we propose to enhance the expressivity and generalizability of the graph retriever via line graph transformation.
We first introduce the definition of a \ti{directed} line graph.
\begin{definition} (\textit{Directed Line Graph}) \label{line_graph}
Given a directed graph $\mcal{G} = (\mcal{V}, \mcal{E})$, where each edge $e = (u, v) \in \mcal{E}$ has a direction from $u$ to $v$, the \textit{directed line graph} $l(\mcal{G})$ is a graph where:
\begin{itemize}
    \item Each node in $l(\mcal{G})$ corresponds to a directed edge in $\mcal{G}$.
    \item There is a directed edge from node $e_1 = (u, v)$ to node $e_2 = (v, w)$ in $l(\mcal{G})$ if and only if the target of $e_1$ matches the source of $e_2$ (i.e., $v$ is shared and direction is preserved).
\end{itemize}
\end{definition}
Transforming a directed graph $\mcal{G}_q$ into $\mcal{G}_q' = l(\mcal{G}_q)$ offers several advantages, as discussed below: 
\begin{itemize}[leftmargin=*]
    \item[\ding{182}] \textbf{Capturing Intra- and Inter-triple Interactions.}
    The line graph transformation enables more expressive modeling of higher-order interactions that are inherently limited in the original knowledge graphs when using message-passing GNNs. In the main paper, we adopt GCN~\cite{kipf2016semi} as a motivating example\footnote{Vanilla GCN doesn't use edge features in the update, here we adopt the most widely-used form for incorporating edge features.}, while further discussions on other widely-used GNN architectures are deferred to Appendix~\ref{discuss_graph_trasnform}. As shown in Eqn.~\ref{eq:mpnn}, GCN updates entity (node) representations by aggregating messages from neighboring entities and their associated relations. Although the node embedding $\rvh_i^{(k)}$ at layer $k$ incorporates relational information through an affine transformation applied to $\rvh_j^{(k-1)}$ and $\rvr_{ij}$, this formulation does not explicitly model the semantic composition of the full triple $(e_i, r_{ij}, e_j)$. Moreover, only the entity representations are iteratively updated, while the relation embedding remains static throughout propagation. This also constrains the model’s representational capacity, particularly in tasks that require fine-grained reasoning over relational structures.
    \begin{equation}
    \label{eq:mpnn}
    \rvh_i^{(k)} = \sigma \left( \sum_{j \in \mathcal{N}(i)} \tilde{\rmA}_{ij} \cdot \mathbf{W}^{(k-1)} \left[ \rvh_j^{(k-1)} \, \| \, \rvr_{ij} \right] \right),
    \end{equation}
    In contrast, the transformed line graph $\mcal{G}_q'$ treats each triple $(e_i, r_i, e_i')$ as a node, facilitating the iterative refinement of triple-level representations. As illustrated in Eqn.~\ref{eq:line_mpnn}, the message passing over $\mcal{G}_q'$ captures both: \tb{(i)} the intra-triple interactions among entities and the associated relations, and \tb{(ii)} the inter-triple dependencies by aggregating contextual information from neighboring triples in the line graph. Here, $\tilde{\rmA}'_{ij}$ denotes the normalized adjacency matrix in $\mcal{G}_q'$, $\mcal{N}_\ell(i)$ denotes the neighbors of node $i$ in the line graph, and $\phi(\cdot)$ is a aggregation function over triple components~(e.g., concatenation).
    \begin{equation}
    \label{eq:line_mpnn}
    \rvh_i^{(k)} = \sigma\left( \sum_{j \in \mathcal{N}_\ell(i)} \tilde{\rmA}_{ij}^\prime \cdot \phi\left(\mathbf{e}_j, \mathbf{r}_j, \mathbf{e}_j'\right) W_{agg}^{(k-1)} + \phi\left(\mathbf{e}_i, \mathbf{r}_i, \mathbf{e}_i'\right) W_{self}^{(k-1)} \right).
    \end{equation}
    \tb{Discussions.} Prior studies in various domains have highlighted the importance of relation paths for improving model generalization. For instance, in KGQA, RoG~\cite{luo2024rog} retrieves relation paths during the retrieval stage, while ToG~\cite{sunthink} relies on relation exploration to search on the KG. Beyond QA, recent advances in foundation models for KGs also leverage relational structures to enhance generalizability~\cite{galkin2023towards,geng2023relational,lee2023ingram,gao2023double,zhou2023ood}. However, how to effectively utilize relational structure within graph retrieval models remains largely underexplored. While previous studies~\cite{li2024simple,galkin2023towards,gao2023double} enhance model expressiveness through model-centric approaches such as labeling tricks~\cite{li2020distance,zhang2021labeling}, our approach instead leverages model-agnostic graph transformation to explicitly model inter-triple interactions, thereby addressing this gap.
    In particular, when only relation embeddings are learned from the triples, the model degenerates into learning relational structures. By introducing inter-triple message passing, \proj{} captures richer contextual triple-level dependencies that go beyond relation-level reasoning. Combined with causally grounded path-based supervision signals, the generalization capability of \proj{} is enhanced. 
    % \item[\ding{183}] \textbf{Increased Receptive Field.}  
    % The line graph transformation also effectively enlarges the receptive field of message-passing GNNs. Specifically, a $K$-layer GNN operating on $\mcal{G}_q'$ is equivalent to a $(K+1)$-layer GNN applied to the original graph $\mcal{G}_q$. Notably. As such, line graph transformation offers a computationally efficient mechanism to enhance the expressivity of GNNs, and can be seamlessly integrated with a wide range of message-passing architectures without modifying their architectures. A formal proof for the increased receptive field is provided in Appendix~XXX.
    \item[\ding{183}] \textbf{Facilitating Path-based Reasoning.} The line graph perspective offers two key advantages for path-based learning and inference. 
\tb{(i)} The line graph transformation preserves \ti{edge directionality}, ensuring that any reasoning path $p$ in the original graph $\mcal{G}_q$ uniquely corresponds to a path $p_{\ell}$ in the transformed graph $\mcal{G}^{\prime}_q$. This property enables path-based learning and inference over $\mcal{G}^{\prime}_q$.
\tb{(ii)} In the original graph, path-based reasoning typically relies on entity embeddings. However, this approach suffers when encountering \ti{unknown entities} (e.g., m.48cu1), which are often represented as zero vectors or randomly initialized embeddings. Such representations lack semantic grounding and significantly impair the effectiveness of path-based inference. In contrast, the triple-level representations adopted in $\mcal{G}^{\prime}_q$ encodes both entity and relation semantics, thereby enriching the representational semantics.
\end{itemize}

\subsection{Path-based Reasoning}
In this section, we propose a path-based learning framework for \proj, which leverages the causally grounded rationales injected during the previous stage to enhance generalization. Moreover, the path-formatted inputs provide a more structured and logically coherent representation, thereby facilitating more effective reasoning by the downstream LLM-based reasoner (see Sec.~\ref{exp:in-depth}).
 
\tb{Training}. In the line graph perspective, each node $v_i$ represents a triple $(e_i, r_i, e_{i}^\prime)$. Consequently, a reasoning path for a given question $q$ is formulated as a sequence $(v_{q(0)}, v_{q(1)}, \ldots, v_{q(k)})$, where $v_{q(i)}$ denotes the vertex selected at step $i$.
Formally, the path reasoning objective is defined as:
\begin{equation}
\max_{\theta,\phi} \; \mathbb{P}_{\theta,\phi}\Bigl( v_{q(i)} \,\Big|\, v_{q(0)}, \ldots, v_{q(i-1)}, q \Bigr).
\end{equation}
Here, the node representation for each $v_i$ in $\mcal{G}_q'$ is obtained by $\mathbf{z}_i = f_\theta(v_i; \mcal{G}_q')$,
where $f_\theta(\cdot)$ is implemented as a GNN model. Furthermore, it is crucial to determine when to terminate reasoning. To address this, we introduce a dedicated \texttt{STOP} node, whose representation at step $i$ is computed as
\begin{equation}
\mathbf{z}^{\text{stop}}_{q(i)} = g_{\phi}( \text{AGG}\bigl(\mathbf{z}_{q(0)}, \mathbf{z}_{q(1)}, \ldots, \mathbf{z}_{q(i-1)}\bigr), \, q ),
\end{equation}
where $g_{\phi}(\cdot)$ is instantiated as a Multi-layer Perceptron (MLP), and $AGG$ denotes an aggregation function that integrates the node representations from the preceding steps, implemented as a summation in our work. The loss objective $\mcal{L}_{path}$ for every step $i$ can be shown as:

\begin{equation}
\label{path_loss}
\mcal{L}_{path} = \mathbb{E}_{\mcal{D}} \left[ -\log \frac{e^{\langle \mathbf{z}_q,\, \mathbf{z}_{q(i)}\rangle}}{\sum_{j \in \mcal{N}(q(i-1))} e^{\langle \mathbf{z}_q,\, \mathbf{z}_j\rangle}} \right], \text{ s.t. } i>0,
\end{equation}
where $\rvz_q$ is the representation for question $q$. 

\tb{Selecting question triple $v_{q(0)}$.} Equation~\ref{path_loss} applies only for $i>0$. At the initial step $i=0$, the model must learn to select the correct question triple $v_{q(0)}$ from all available candidate nodes. Since candidates are restricted to those involving the question entity $e_q$, the candidate space is defined as
\begin{equation}
\mcal{V}_{cand} := \{ v_i \mid v_i = (e_i, r_i, e_i'), \, e_i = e_q \}.
\end{equation}
Although one can apply a softmax loss over $\mcal{V}_{cand}$, in practice this set often contains hundreds or even thousands of nodes, while the ground-truth $v_{q(0)}$ is typically unique or limited. This imbalance can lead to ineffective optimization and inference.

To mitigate this issue, we introduce \textit{positive sample augmentation}. Specifically, given the question representation $\rvz_q$ and the set of relations present in $\mcal{G}_q$, a reasoning model (e.g., GPT-4o-mini) is employed to select the most probable relation set for the question $q$, denoted as $\mcal{R}_*$. The augmented positive vertex set is then defined as:
\begin{equation}
\mcal{V}_{pos} := \{ v_{q(0)} \} \cup \{ v_i \mid v_i=\langle e_i,r_i,e_i^\prime \rangle,e_i = e_q, \, r_i \in \mcal{R}_* \},
\end{equation}
and the negative vertex set is given by $\mcal{V}_{neg} := \mcal{V}_{cand} \setminus \mcal{V}_{pos}$.
We adopt a negative sampling objective~\cite{mikolov2013efficient} for optimizing the selection of the initial triple. The question triple selection loss $\mcal{L}_q$ is formulated as: 
\begin{equation}
\label{eq:L_q_neg_sampling}
\mathcal{L}_q =
\mathbb{E}_{q \sim \mathcal{D}}
\Biggl[
-\,
\frac{1}{\lvert \mathcal{V}_{pos} \rvert}
\sum_{v^{+}\in\mathcal{V}_{pos}}
\log \sigma\!\bigl( \langle \mathbf{z}_{q}, \mathbf{z}_{v^{+}} \rangle \bigr)
\;-\;
\frac{1}{\lvert \mathcal{V}_{neg} \rvert}
\sum_{v^{-}\in\mathcal{V}_{neg}}
\log \sigma\!\bigl( -\langle \mathbf{z}_{q}, \mathbf{z}_{v^{-}} \rangle \bigr)
\Biggr],
\end{equation}
where $\sigma(x) = 1 / (1 + e^{-x})$ is the logistic sigmoid function. 

\tb{Look-ahead embeddings.} As $\mcal{G}_q^\prime$ is an directed graph, the node representation $\rvz_i, i \in \mcal{V}$ can only incorporate information from its predecessors $v_{q(0)}, \dots, v_{q(i-1)}$. This can be suboptimal, especially for earlier nodes along the reasoning path, since they cannot utilize the information from subsequent nodes to determine the next action. To address this issue, we introduce a \emph{look-ahead} message-passing mechanism by maintaining two sets of model parameters $\theta:= \{\overset{\rightarrow}{\theta},\overset{\leftarrow}{\theta}\}$, which acts on $\mcal{G}_q^\prime$ and its edge-reversed counterpart $\overset{\leftarrow}{\mcal{G}^\prime}_q$ respectively.
\begin{equation}
\label{eq:look_ahead}
\begin{aligned}
\overset{\rightarrow}{\mathbf{z}}_{i} 
&= f_{\overset{\rightarrow}{\theta}}\bigl(v_i;\,\mcal{G}_q'\bigr),
\\[1pt]
\overset{\leftarrow}{\mathbf{z}}_{i} 
&= f_{\overset{\leftarrow}{\theta}}\bigl(v_i;\,\overset{\leftarrow}{\mcal{G}^\prime}_q\bigr),
\\[1pt]
\mathbf{z}_i 
&= MEAN(\overset{\rightarrow}{\mathbf{z}}_{i}, \overset{\leftarrow}{\mathbf{z}}_{i}).
\end{aligned}
\end{equation}

\tb{Inference.} During the inference stage, we first sample the the question triple $\widetilde{v}_{q(0)}$ for $K$ times with replacement. For each of these $K$ candidates, we then sample 5 reasoning paths, we then choose $M$ retrieved paths with highest probability score, followed by deduplication to eliminate repeated paths. The resulting set of unique reasoning paths is passed to the reasoning model to facilitate knowledge-grounded question answering.

\section{Experiments}
\label{sec:exp}
\begin{wraptable}{r}{0.55\textwidth}  
\vspace{-0.3cm}
\caption{
Test performance on WebQSP and CWQ. The best results are highlighted in \textbf{bold}, and the second-best in \underline{underline}. We use \textcolor{red}{red}, \textcolor{blue}{blue}, and \textcolor{green}{green} shading to indicate the best-performing result within each retrieval configuration. $(X, Y)$ denotes the average number of retrieved triples on WebQSP and CWQ respectively.}
\label{tab:main_result}
\centering
\begin{adjustbox}{width=\linewidth}
\begin{tabular}{clcccc}  % Added one more 'c' to the beginning
\toprule
\multirow{2}{*}{} & \multirow{2}{*}{Method} & \multicolumn{2}{c}{WebQSP} & \multicolumn{2}{c}{CWQ} \\
\cmidrule(lr){3-4}\cmidrule(lr){5-6}
 & & Macro-F1 & Hit & Macro-F1 & Hit \\
\midrule
\multirow{5}{*}{\rotatebox[origin=c]{90}{LLM}} 
  & Qwen-7B~\cite{qwen2} &   35.5 & 50.8     & 21.6 &  25.3     \\
  & Llama3.1-8B~\cite{llama3} &   34.8 & 55.5   & 22.4 &  28.1     \\
  & GPT-4o-mini~\cite{gpt4o} &   40.5 & 63.8     & 40.5 &  63.8     \\
  & ChatGPT~\cite{chatgpt} &   43.5 & 59.3     & 30.2 &  34.7     \\
  & ChatGPT+CoT~\cite{wei2022chain} &   38.5 & 73.5     & 31.0 &  47.5     \\
\midrule
\multirow{10}{*}{\rotatebox[origin=c]{90}{KG+LLM}} 
  & UniKGQA~\cite{jiang2022unikgqa} &   72.2 & --     & 49.0 & --     \\
  & KD-CoT~\cite{wang2023knowledge} &   52.5 & 68.6 & --     & 55.7 \\
  & ToG (GPT-4)~\cite{sunthink} &   --     & 82.6 & --     & 67.6 \\
  & StructGPT~\cite{jiang2023structgpt} &     --     & 74.6 & --     & --     \\
  & Retrieve-Rewrite-Answer~\cite{wu2023retrieverewriteanswerkgtotextenhancedllms} & --     & 79.3 & --     & --     \\
  & G-Retriever~\cite{he2024g} &   53.4 & 73.4 & --     & --     \\
  & RoG~\cite{luo2024rog} &    70.2 & 86.6 & 54.6 & 61.9 \\
  & EtD~\cite{liu2024explore} &    --     & 82.5 & --     & 62.0 \\
  & GNN-RAG~\cite{mavromatis2024gnn} &   71.3 & 85.7 & \tb{59.4} & 66.8 \\
  & SubgraphRAG + Llama3.1-8B~\cite{li2024simple} &  70.5 & 86.6 & 47.2 & 56.9 \\
  & SubgraphRAG + GPT-4o-mini~\cite{li2024simple} &  77.4 & 90.1 & 54.1 & 62.0 \\
  & SubgraphRAG + GPT-4o~\cite{li2024simple} &        76.4 & 89.8 & \underline{59.1} & 66.6 \\
\midrule
% \multirow{3}{*}{\rotatebox[origin=c]{90}{K=60,M=80}} 
  & Ours + Llama3.1-8B~(24.87, 28.53) & 74.8 & 87.8 & 48.6 & 57.6 \\
  & Ours + GPT-4o-mini~(24.87, 28.53) & \cellcolor{red!15}79.8 & 92.0 & 56.6 & \cellcolor{red!15}68.0 \\
  & Ours + GPT-4o~(24.87, 28.53) & 79.4 & \cellcolor{red!15}\underline{93.0} & \cellcolor{red!15}56.7 & \cellcolor{red!15}68.0 \\ \midrule
  & Ours + Llama3.1-8B~(31.89, 38.76) & 76.2 & 88.3 & 56.1 & 66.7 \\
  & Ours + GPT-4o-mini~(31.89, 38.76) & \cellcolor{blue!10}79.2 & 92.2 & 57.2 & \cellcolor{blue!10}68.9 \\
  & Ours + GPT-4o~(31.89, 38.76) & \cellcolor{blue!10}79.2 & \cellcolor{blue!10}92.3 & \cellcolor{blue!10}58.3 & 68.8 \\ \midrule
  & Ours + Llama3.1-8B~(41.51, 52.45) & 77.3 & 88.8 & 56.8 & 67.2 \\
  & Ours + GPT-4o-mini~(41.51, 52.45) & \underline{80.4} & 92.5 & 58.1 & \cellcolor{green!15}\tb{69.3} \\
  & Ours + GPT-4o~(41.51, 52.45) & \cellcolor{green!15}\tb{80.7} & \cellcolor{green!15}\tb{93.3} & \cellcolor{green!15}58.8 & \underline{69.0} \\
\bottomrule
\end{tabular}
\end{adjustbox}
\vspace{-10pt}
\end{wraptable}
This section evaluates our proposed method by answering the following research questions: \textbf{RQ1:} How does our method compare to state-of-the-art baselines in overall performance?
\textbf{RQ2:} How do various design choices in \proj{} influence performance?
\textbf{RQ3:} How does our method perform on questions with different numbers of reasoning hops?
\textbf{RQ4:} Can the path-structured inputs enhance the performance of downstream LLM-based reasoning modules?
\textbf{RQ5:} How faithful and generalizable is our method?
Additionally, we provide efficiency analysis and demonstrations of retrieved reasoning paths in Appendix~\ref{app:exp}.
\subsection{Experiment Setup}
\noindent\tb{Datasets.} We conduct experiments on two widely-used and challenging benchmarks for KGQA: WebQSP~\citep{yih2016value} and CWQ~\citep{talmor2018web}, Both datasets are constructed to test multi-hop reasoning capabilities, with questions requiring up to four hops of inference over a large-scale knowledge graph. The underlying knowledge base for both is Freebase~\citep{bollacker2008freebase}. Detailed dataset statistics are provided in Appendix~\ref{dataset}. 

\noindent\tb{Baselines.} We compare \proj{} with 15 state-of-the-art baseline methods, encompassing both general LLM without external KGs, and KG-based RAG approaches that integrate KGs with LLM for KGQA. Among them, GNN-RAG and SubgraphRAG utilize graph-based retrievers to extract relevant knowledge from the knowledge graph. For SubgraphRAG, we adopt the same reasoning modules as used in our framework to ensure fair comparisons. Since \proj{} retrieves fewer than 100 triples in the experiments, we report the performance of SubgraphRAG using the setting with 100 retrieved triples, as provided in its original paper~\cite{li2024simple}.

\noindent\tb{Evaluation.} Following previous practice, we adopt Hits and Macro-F1 to assess the effectiveness of \proj. \textit{Hits} measures whether the correct answer appears among the predictions, while \textit{Macro-F1} computes the average of F1 scores across all test samples, providing a balanced measure of precision and recall.

\noindent\tb{Setup.} Following prior work, we employ \ti{gte-large-en-v1.5}~\cite{li2023towards} as the pretrained text encoder to extract text embeddings to ensure fair comparisons. For reasoning path annotation, we use GPT-4o, and conduct ablation studies on alternative annotators in Sec.~\ref{sec:label_annotator}. The graph retriever adopted is a standard GCN without structural modifications. We evaluate three configurations for the retrieval parameters $(K, M)$, which result in varying sizes of retrieved triple sets: $\{K=60, M=80\}$, $\{K=80, M=120\}$, and $\{K=120, M=200\}$. For the reasoning module, we consider GPT-4o, GPT-4o-mini, and instruction-tuned Llama3.1-8B model without fine-tuning efforts.
\subsection{Main Results}
From Table~\ref{tab:main_result}, we can make several key observations:
\ding{182} General LLM methods consistently underperform compared to KG-based RAG approaches, as they rely solely on internal parametric knowledge for reasoning.
\ding{183} Among KG-based RAG methods, \proj{} achieves superior performance across most settings, significantly outperforming competitive baselines, particularly when paired with moderately capable LLM reasoners such as GPT-4o-mini and LLaMa3.1-8B.
\ding{184} Compared to SubgraphRAG, \proj{} exhibits a notably smaller performance gap between strong and weak reasoners. For instance, on the CWQ dataset, the Macro-F1 gap between GPT-4o and LLaMA3.1-8B is reduced from $14.78\%$ (for SubgraphRAG) to $2.22\%$ with \proj{}, indicating that our retriever effectively retrieves critical information in a more compact and efficient manner. 
\begin{wraptable}{r}{0.5\textwidth}
\caption{
 The impact of different label annotation methods on WebQSP and CWQ. $K=60,M=80$ is used in the experiments.
}
\label{tab:teacher_supervision}
\centering
\begin{adjustbox}{width=\linewidth}
\begin{tabular}{llccccc}
\toprule
\multirow{2}{*}{Label Annotator} &  & \multicolumn{2}{c}{WebQSP} & \multicolumn{2}{c}{CWQ} \\
\cmidrule(lr){3-4} \cmidrule(lr){5-6}
 & & Macro-F1 & Hit & Macro-F1 & Hit \\
\midrule
\multirow{2}{*}{GPT-4o} 
  & GPT-4o-mini     & \textbf{79.8}  & \textbf{92.0}  & \textbf{56.8} & \textbf{68.0} \\
  & Llama3.1-8B    & \underline{74.8} & 87.8  & 48.6 & 57.7 \\
\midrule
\multirow{2}{*}{GPT-4o-mini} 
  & GPT-4o-mini     & \underline{78.6} & 91.6  & 54.6 & \underline{64.2} \\
  & Llama3.1-8B    & 73.7 & 87.8  & 46.0 & 55.4 \\
\midrule
\multirow{2}{*}{ShortestPath} 
  & GPT-4o-mini     & \underline{78.6} & \underline{91.7}  & \underline{54.7} & 63.8 \\
  & Llama3.1-8B    & 74.3 & 88.7  & 46.3 & 55.2 \\
\bottomrule
\end{tabular}
\end{adjustbox}
\end{wraptable}
\ding{185} When pairing with Llama3.1-8B and GPT-4o-mini, \proj{} outperforms SubgraphRAG by $9.61\%\! \uparrow$ and $3.91\% \!\uparrow$ in terms of Macro-F1 in CWQ dataset, with $50\% \!\!\downarrow$ retrieved triples. 
\ding{186} For GPT-4o, a high-capacity reasoner, \proj{} slightly underperforms SubgraphRAG and GNN-RAG. We hypothesize that this is due to GPT-4o’s strong denoising ability, which allows it to benefit from longer, noisier inputs. Moreover, although shortest-path-based labeling may introduce noise, it ensures broader coverage of information, which may benefit powerful reasoners. To test this hypothesis, we perform inference with an extended budget up to 100 triples trained using shortest path labels as well as rational labels, resulting in a Macro-F1 of $59.61\%$ on CWQ. This suggests that labeling strategies should be adapted based on the reasoning capacity of the downstream LLM.

\subsection{Ablation Study}
\label{sec:label_annotator}
In this section, we investigate the impact of different label annotators and different architectural designs in the graph retriever. We use $K=60,M=80$ during inference time to conduct the experiments on both datasets.

\tb{Impact of label annotators.} While our main results are obtained using rational paths labeled by GPT-4o, we further evaluate two alternative strategies: GPT-4o-mini and shortest path.
As shown in Table~\ref{tab:teacher_supervision}, Using GPT-4o as the label annotator yields the best overall performance across both datasets. Its advantage is particularly evident on the more challenging CWQ dataset, which requires multi-hop reasoning. Specifically, with GPT-4o-mini as the downstream reasoning module, GPT-4o-labeled data achieves absolute gains of $2.16\%$ and $2.07\%$ in Macro-F1, and $3.74\%$ and $4.15\%$ in Hit, compared to using GPT-4o-mini and shortest-path heuristics respectively. A similar trend is observed when Llama3.1-8B is used as the reasoner.
In contrast, GPT-4o-mini as a label annotator consistently underperforms GPT-4o and yields results comparable to those derived from shortest-path supervision. These findings indicate that a language model with limited reasoning capacity may struggle to extract causally grounded paths, leading to suboptimal downstream performance.
\begin{wraptable}{r}{0.48\textwidth}  
    \vspace{-0.1cm}
    \caption{
        Performance of different graph retrievers. Best results are in \textbf{bold}.
    }
    \label{tab:graph_retriever}
    \centering
    \begin{adjustbox}{width=\linewidth}
        \begin{tabular}{llcccc}
            \toprule
            \multirow{2}{*}{Graph Retriever} &  & \multicolumn{2}{c}{WebQSP} & \multicolumn{2}{c}{CWQ} \\
            \cmidrule(lr){3-4} \cmidrule(lr){5-6}
             & & Macro-F1 & Hit & Macro-F1 & Hit \\
            \midrule
            \multirow{2}{*}{1-layer GCN} 
                 & GPT-4o-mini  & 74.0 & \underline{90.4} & 51.4 & 62.5 \\
                 & Llama3.1-8B  & 67.7 & 84.1 & 44.2 & 53.1 \\
            \midrule
            \multirow{2}{*}{2-layer GCN (w/o look-ahead)} 
                 & GPT-4o-mini  & 74.4 & 89.1 & \underline{51.9} & 61.0 \\
                 & Llama3.1-8B  & 69.1 & 84.6 & 43.2 & 51.6 \\
            \midrule
            \multirow{2}{*}{2-layer GCN} 
                 & GPT-4o-mini  & \textbf{79.8} & \textbf{92.0} & \textbf{56.8} & \textbf{68.0} \\
                 & Llama3.1-8B  & \underline{74.8} & 87.9 & 48.6 & \underline{57.7} \\
            \bottomrule
        \end{tabular}
    \end{adjustbox}
\end{wraptable}
\tb{Impact of graph retriever architectures.}
We analyze how different GNN architectures used in the graph retriever affect KGQA performance. In the experiment, the label annotator is fixed to GPT-4o, and we evaluate performance using GPT-4o-mini and Llama3.1-8B as the reasoning models. As shown in Table~\ref{tab:graph_retriever}, a 1-layer GCN performs similarly as a 2-layer GCN 
without the look-ahead mechanism. However, when the look-ahead message passing is incorporated into the 2-layer GCN, KGQA performance improves significantly across both datasets and reasoning models. This highlights the importance of the look-ahead design, which facilitates both reasoning paths selection and question triples selection by incorporating information from subsequent nodes along the reasoning paths. 
\vspace{-10pt}
\subsection{In-Depth Analysis}
\label{exp:in-depth}
\tb{Performance analysis on varying reasoning hops.} We evaluate the performance of \proj{} on test samples with varying numbers of reasoning hops under the three experimental settings introduced earlier. We adopt Llama3.1-8B and GPT-4o-mini as the downstream reasoners, and restrict the evaluation to samples whose answer entities are present in the KG. From Table~\ref{tab:acc_hop}, we can make several key observations:
\ding{182} On the WebQSP dataset, \proj{} achieves strong performance even with only 24.87 retrieved triples, particularly for samples requiring two-hop reasoning. When increasing the retrieval to 41.51 triples, \proj{} outperforms the second-best method in terms of Macro-F1 by $12.39\%$ with Llama3.1-8B and $4.44\%$ with GPT-4o-mini.
\ding{183} On the CWQ dataset, when the number of retrieved triples exceeds 38 ($K=80,M=120$), \proj{} consistently surpasses previous methods across all hops when using Llama3.1-8B as the reasoner. Notably, for queries requiring $\geq 3$ hops, \proj{} achieves substantial gains over SubgraphRAG, improving Macro-F1 by $40.28\%$ and $18.97\%$ with Llama3.1-8B and GPT-4o-mini respectively.
\begin{wrapfigure}{r}{0.4\textwidth}
\vspace{-0.5em}
\centering
\includegraphics[width=\linewidth]{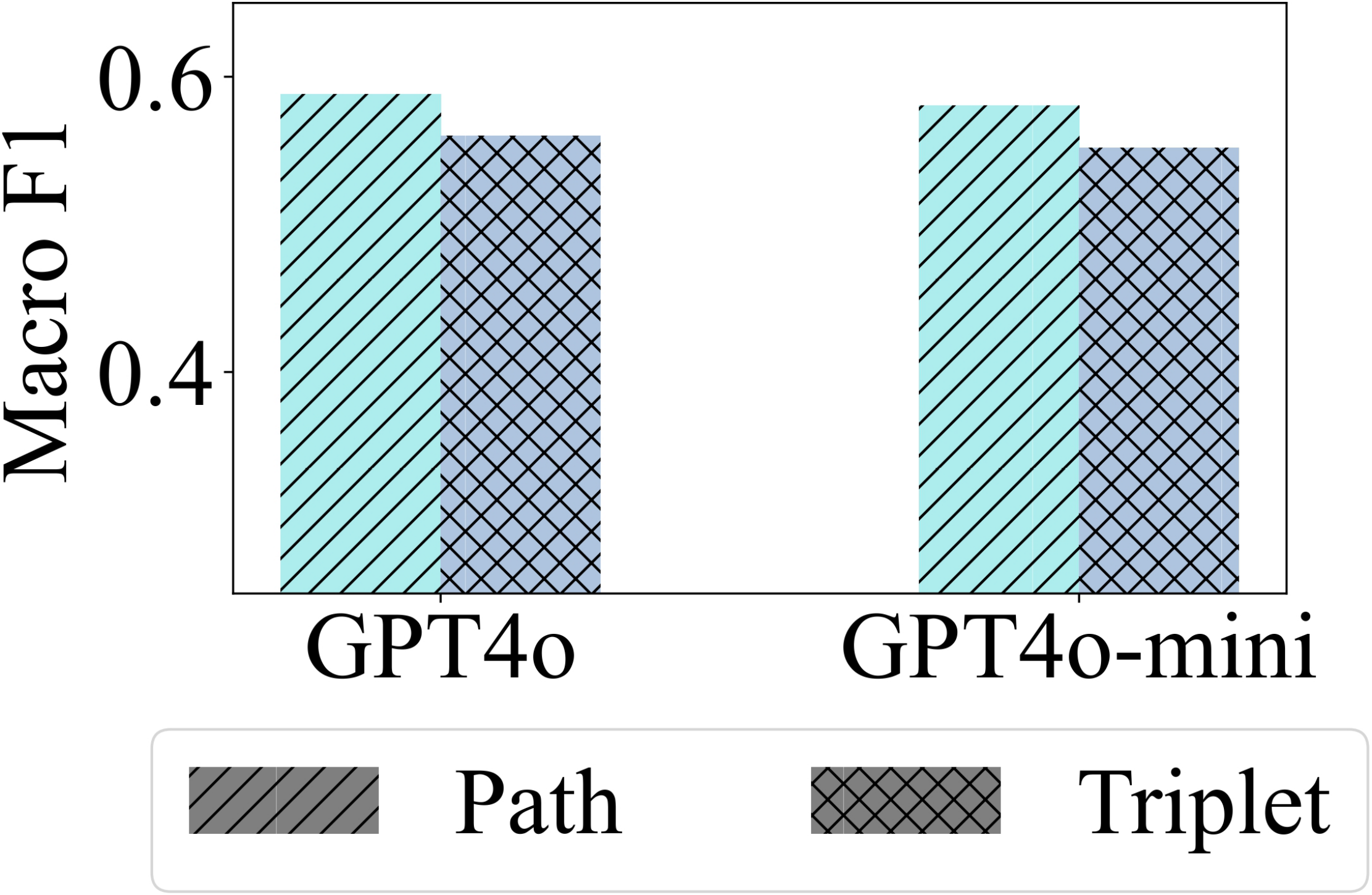}
\vspace{-1.0em}
\caption{
Impact of path-formatted inputs on reasoning performance.
}
\label{fig:path_format_effect}
\vspace{-1.5em}
\end{wrapfigure}
\ding{184} Finally, for 2-hop samples in the CWQ dataset, we observe that when using Llama3.1-8B as the reasoner, \proj{} outperforms SubgraphRAG. However, when paired with GPT-4o-mini, \proj{} underperforms in terms of Macro-F1, despite retrieving the same set of triples. This suggests that GPT-4o-mini possesses stronger denoising capabilities and is better able to reorganize unstructured triples into coherent reasoning paths. Moreover, the relatively small difference in Hit but large gap in Macro-F1 indicates that the weakly supervised labeling strategy used by SubgraphRAG may expose the reasoner to more diverse knowledge. As a result, the increased number of retrieved triples may better facilitate GPT-4o-mini than Llama3.1-8B.
\begin{table}[!t]
    \caption{Breakdown of QA performance by reasoning hops.}
    \label{tab:acc_hop}
    \centering
    \begin{adjustbox}{width=0.9\textwidth}
        \begin{tabular}{ccccccccccc}
            \toprule
             & \multicolumn{4}{c}{WebQSP} & \multicolumn{6}{c}{CWQ}\\
            \cmidrule(lr){2-5} \cmidrule(lr){6-11}
              & \multicolumn{2}{c}{$1$}  
              & \multicolumn{2}{c}{$2$}  
              & \multicolumn{2}{c}{$1$}  
              & \multicolumn{2}{c}{$2$} 
              & \multicolumn{2}{c}{$\geq 3$}  \\
              & \multicolumn{2}{c}{$(65.8 \%)$}  
              & \multicolumn{2}{c}{$(34.2 \%)$} 
              & \multicolumn{2}{c}{$(28.0 \%)$}  
              & \multicolumn{2}{c}{$(65.9 \%)$}  
              & \multicolumn{2}{c}{$(6.1 \%)$}  \\
            \cmidrule(lr){2-3} \cmidrule(lr){4-5} \cmidrule(lr){6-7} \cmidrule(lr){8-9} \cmidrule(lr){10-11} 
             & Macro-F1 & Hit & Macro-F1 & Hit & Macro-F1 & Hit & Macro-F1 & Hit & Macro-F1 & Hit \\
            \midrule
            G-Retriever 
            & 56.4 & 78.2 & 45.7 & 65.4 & - & - & - & - & - & - \\
            RoG
            & 77.1 & 93.0 & 62.5 & 81.5 & 59.8 & 66.3 & 59.7 & 68.6 & 41.5 & 43.3 \\
            SubgraphRAG + Llama3.1-8B
            & 75.5 & 91.4 & 65.9 & 83.6 & 51.5 & 63.1 & 57.5 & 68.9 & 41.9 & 47.4 \\
            SubgraphRAG + GPT-4o-mini 
            & 80.6 & 92.9 & 74.1 & 88.5 & 57.4 & 67.3 & \tb{63.9} & \tb{72.7} & 51.1 & 54.4 \\
            \midrule
            Ours + Llama3.1-8B
            & 78.0 & 90.5 & 71.3 & 86.1 & 55.6 & 62.8 & 52.1 & 62.8 & 51.2 & 55.6 \\
            Ours + GPT-4o-mini
            & \tb{83.9} & \underline{95.6} & 75.3 & 88.5 & 60.6 & 69.7 & 59.0 & 70.8 & 53.5 & 57.9 \\
            \midrule
            Ours + Llama3.1-8B
            & 80.1 & 90.8 & 72.9 & 87.8 & 60.4 & 70.2 & 58.6 & \underline{71.8} & 54.0 & 59.6 \\
            Ours + GPT-4o-mini
            & \underline{82.7} & \tb{95.8} & \underline{76.3} & \underline{89.3} & \underline{61.8} & \underline{70.6} & 60.0 & 71.0 & 56.9 & 61.4 \\
            \midrule
            Ours + Llama3.1-8B
            & 81.1 & 91.4 & 74.0 & 88.1 & 61.4 & 70.3 & 59.9 & 71.5 & \underline{58.8} & \underline{63.2} \\
            Ours + GPT-4o-mini
            & \tb{83.9} & \tb{95.8} & \tb{77.4} & \tb{90.2} & \tb{62.0} & \tb{72.2} & \underline{60.8} & 71.6 & \tb{59.4} & \tb{64.7} \\
            \bottomrule
        \end{tabular}
    \end{adjustbox}
    \vspace{-10pt}
\end{table}

\tb{Can Path-Formatted Inputs Facilitate LLM Reasoning?}
We investigate whether path-based inputs can facilitate the reasoning efficacy of LLMs. We conduct experiments using retrieved texts with $K=120$ and $M=180$. For each sample, we convert the retrieved path into a set of triples and then randomly shuffle these triples. This procedure preserves the overall information content but removes structural coherence, requiring the model to reconstruct reasoning chains from 
disordered facts, posing a greater challenge for multi-hop questions. We then evaluate on the CWQ dataset, which contains more multi-hop reasoning samples than WebQSP. As shown in Figure~\ref{fig:path_format_effect}, both GPT-4o and GPT-4o-mini exhibit significant drops in Macro-F1 when using unstructured triple inputs. This result indicates that path-formatted inputs consistently benefit LLM-based reasoning across models with varying reasoning capacities, justifying the path-based reasoning paradigm in \proj.
\subsection{Empirical Analysis on Faithfulness and Generalization}

\definecolor{upgreen}{HTML}{A8D8B9}   % clearer mint green
\definecolor{downred}{HTML}{F6A6A7}   % clearer salmon red
\begin{figure}[t]
  \centering
  % ---------- (a) the plot ----------
  \begin{subfigure}[c]{0.4\linewidth}
    \includegraphics[width=\linewidth]{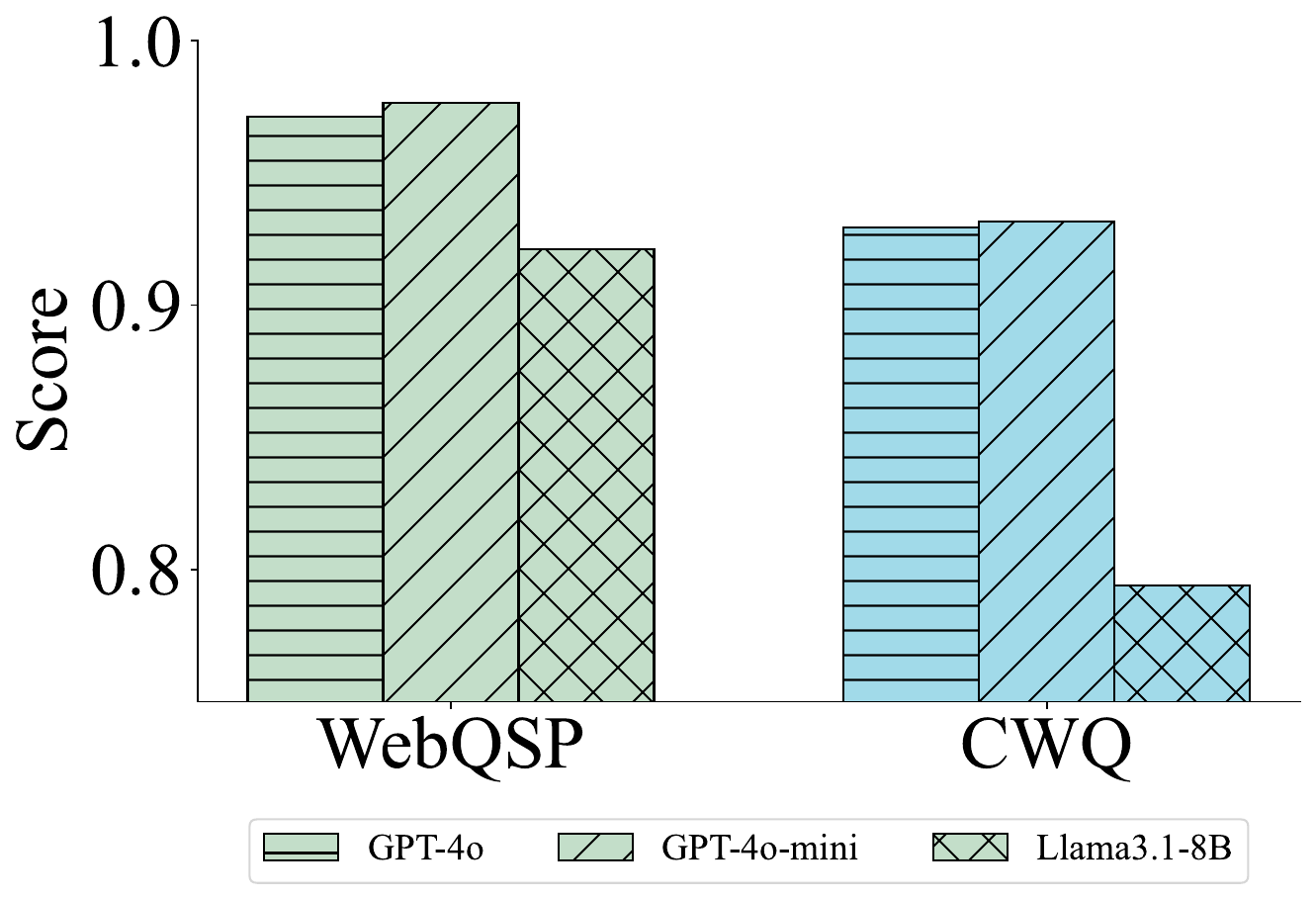}
    \caption{Hallucination scores on WebQSP and CWQ
             ($K{=}80$, $M{=}120$).}
    \label{fig:hallucination}
  \end{subfigure}\hfill
  % ---------- (b) the table ----------
  \begin{subtable}[c]{0.56\linewidth}
    \caption{Generalization performance of \proj. $\leftrightarrow$ denotes train on one dataset while test on the other. We set $K=120,M=180$ and $K=800,M=1200$ respectively during inference time.}
    \label{tab:transferability}
    \vspace{0.4em}
    \scalebox{0.72}{%
        \begin{tabular}{@{}ccccc@{}}
        \toprule
        & \multicolumn{2}{c}{\textbf{WebQSP}} & \multicolumn{2}{c}{\textbf{CWQ}}\\
        \cmidrule(l){2-5}
        & Macro‑F1 & Hit & Macro‑F1 & Hit\\
        \midrule
        % ---------- SubgraphRAG block ----------
        SubgraphRAG (100)                      & 77.5 & 90.1 & 54.1 & 62.0 \\
        SubgraphRAG ($\leftrightarrow$, 100)   & 73.8\,{\scriptsize\textcolor{downred}{$\downarrow$4.8\%}}
                                                & 88.1\,{\scriptsize\textcolor{downred}{$\downarrow$2.2\%}}
                                                & 44.7\,{\scriptsize\textcolor{downred}{$\downarrow$17.4\%}}
                                                & 54.2\,{\scriptsize\textcolor{downred}{$\downarrow$12.6\%}} \\
        SubgraphRAG ($\leftrightarrow$, 500)   & 76.2\,{\scriptsize\textcolor{downred}{$\downarrow$1.7\%}}
                                                & 91.2\,{\scriptsize\textcolor{upgreen}{$\uparrow$1.2\%}}
                                                & 50.3\,{\scriptsize\textcolor{downred}{$\downarrow$7.0\%}}
                                                & 60.8\,{\scriptsize\textcolor{downred}{$\downarrow$1.9\%}} \\
        \midrule
        % ---------- Ours block ----------
        Ours (41.5, 52.4)                      & 80.4 & 92.5 & 58.1 & 69.3 \\
        Ours ($\leftrightarrow$, 52.8, 57.3)   & 79.2\,{\scriptsize\textcolor{downred}{$\downarrow$1.5\%}}
                                                & 92.6\,{\scriptsize\textcolor{upgreen}{$\uparrow$0.1\%}}
                                                & 56.3\,{\scriptsize\textcolor{downred}{$\downarrow$3.1\%}}
                                                & 67.7\,{\scriptsize\textcolor{downred}{$\downarrow$2.3\%}} \\
        Ours ($\leftrightarrow$, 88.7, 107.2)  & 80.1\,{\scriptsize\textcolor{downred}{$\downarrow$0.4\%}}
                                                & 93.1\,{\scriptsize\textcolor{upgreen}{$\uparrow$0.6\%}}
                                                & 56.1\,{\scriptsize\textcolor{downred}{$\downarrow$3.4\%}}
                                                & 67.9\,{\scriptsize\textcolor{downred}{$\downarrow$2.0\%}} \\
        \bottomrule
        \end{tabular}
    }
    \vspace{0.8em}
  \end{subtable}
  % ---------- end table ----------
  \caption{Illustration of faithfulness and generalization performance.}
  \vspace{-1ex}
\end{figure}

\tb{Faithfulness.} We further evaluate whether our framework relies on retrieved knowledge or intrinsic parametric knowledge for question answering. To this end, we design a prompt to evaluate whether the answer was generated based 
on retrieved information for each sample with Hit > 0 using GPT-4o-mini. We 
define the average of the resulting binary outputs as the \textit{hallucination score}, where for a single sample a score of 1 indicates reliance on retrieved knowledge, and 0 otherwise. As shown in Figure~\ref{fig:hallucination}, both GPT-4o and GPT-4o-mini achieve hallucination scores $\geq 0.9$ on both datasets, indicating that the vast majority of predictions are grounded in retrieved information. On the other hand, we observe that all models are more prone to hallucinations on the CWQ dataset, likely due to its greater proportion of questions requiring more complex reasoning process. 

\tb{Generalizability.} 
In this section, we evaluate the generalization capability of our framework by training the graph retriever on one dataset and testing it on another. As shown in Table~\ref{tab:transferability}, the performance gap between in-distribution and cross-dataset settings for \proj{} is consistently smaller than that of SubgraphRAG. Notably, on the CWQ dataset, the model trained on WebQSP using \proj{} experiences only a 3.1\% drop in Macro-F1, compared to a 17.4\% drop observed for SubgraphRAG. These results demonstrate that our lightweight graph retriever, enhanced through design choices specifically tailored to the structure of knowledge graphs, attains strong generalization performance without requiring a (far) larger-scale LLM-based retriever.

\section{Conclusion}
We propose \proj{}, a lightweight graph retriever specifically designed for knowledge graphs and the KGQA task. By integrating several key components such as label rationalizer, model-agnostic graph transformation, and bidirectional message passing, \proj{} enhances representational capacity and improves generalization performance. Empirically, \proj{} achieves superior generalization ability, outperforming prior state-of-the-art methods by $2.66\%-20.34\%$ when paired with GPT-4o-mini and Llama3.1-8B. Notably, \proj{} significantly reduces the performance gap between smaller and more powerful LLM-based reasoners, as well as the gap under cross-dataset settings, highlighting its superior retrieval capability and generalizability.

\bibliographystyle{plain}
{\small
\bibliography{reference}

\begin{thebibliography}{10}

\bibitem{achiam2023gpt}
Josh Achiam, Steven Adler, Sandhini Agarwal, Lama Ahmad, Ilge Akkaya, Florencia~Leoni Aleman, Diogo Almeida, Janko Altenschmidt, Sam Altman, Shyamal Anadkat, et~al.
\newblock Gpt-4 technical report.
\newblock {\em arXiv preprint arXiv:2303.08774}, 2023.

\bibitem{berant2013semantic}
Jonathan Berant, Andrew Chou, Roy Frostig, and Percy Liang.
\newblock Semantic parsing on freebase from question-answer pairs.
\newblock In {\em Proceedings of the 2013 conference on empirical methods in natural language processing}, pages 1533--1544, 2013.

\bibitem{bollacker2008freebase}
Kurt Bollacker, Colin Evans, Praveen Paritosh, Tim Sturge, and Jamie Taylor.
\newblock Freebase: a collaboratively created graph database for structuring human knowledge.
\newblock In {\em Proceedings of the 2008 ACM SIGMOD international conference on Management of data}, pages 1247--1250, 2008.

\bibitem{brown2020language}
Tom Brown, Benjamin Mann, Nick Ryder, Melanie Subbiah, Jared~D Kaplan, Prafulla Dhariwal, Arvind Neelakantan, Pranav Shyam, Girish Sastry, Amanda Askell, et~al.
\newblock Language models are few-shot learners.
\newblock {\em Advances in Neural Information Processing Systems}, 33:1877--1901, 2020.

\bibitem{bubeck2023sparks}
Sébastien Bubeck, Varun Chandrasekaran, Ronen Eldan, Johannes Gehrke, Eric Horvitz, Ece Kamar, Peter Lee, Yin~Tat Lee, Yuanzhi Li, Scott Lundberg, Harsha Nori, Hamid Palangi, Marco~Tulio Ribeiro, and Yi~Zhang.
\newblock Sparks of artificial general intelligence: Early experiments with gpt-4.
\newblock {\em arXiv preprint arXiv:2303.12712}, 2023.

\bibitem{chein2008graph}
Michel Chein and Marie-Laure Mugnier.
\newblock {\em Graph-based knowledge representation: computational foundations of conceptual graphs}.
\newblock Springer Science \& Business Media, 2008.

\bibitem{chen2024plan}
Liyi Chen, Panrong Tong, Zhongming Jin, Ying Sun, Jieping Ye, and Hui Xiong.
\newblock Plan-on-graph: Self-correcting adaptive planning of large language model on knowledge graphs.
\newblock {\em arXiv preprint arXiv:2410.23875}, 2024.

\bibitem{dehghan-etal-2024-ewek}
Mohammad Dehghan, Mohammad Alomrani, Sunyam Bagga, David Alfonso-Hermelo, Khalil Bibi, Abbas Ghaddar, Yingxue Zhang, Xiaoguang Li, Jianye Hao, Qun Liu, Jimmy Lin, Boxing Chen, Prasanna Parthasarathi, Mahdi Biparva, and Mehdi Rezagholizadeh.
\newblock {EWEK}-{QA} : Enhanced web and efficient knowledge graph retrieval for citation-based question answering systems.
\newblock In Lun-Wei Ku, Andre Martins, and Vivek Srikumar, editors, {\em Proceedings of the 62nd Annual Meeting of the Association for Computational Linguistics (Volume 1: Long Papers)}, pages 14169--14187, Bangkok, Thailand, August 2024. Association for Computational Linguistics.

\bibitem{fan2024hardmath}
Jingxuan Fan, Sarah Martinson, Erik~Y Wang, Kaylie Hausknecht, Jonah Brenner, Danxian Liu, Nianli Peng, Corey Wang, and Michael~P Brenner.
\newblock Hardmath: A benchmark dataset for challenging problems in applied mathematics.
\newblock {\em arXiv preprint arXiv:2410.09988}, 2024.

\bibitem{fey2019fast}
Matthias Fey and Jan~Eric Lenssen.
\newblock Fast graph representation learning with pytorch geometric.
\newblock {\em arXiv preprint arXiv:1903.02428}, 2019.

\bibitem{galkin2023towards}
Mikhail Galkin, Xinyu Yuan, Hesham Mostafa, Jian Tang, and Zhaocheng Zhu.
\newblock Towards foundation models for knowledge graph reasoning.
\newblock {\em arXiv preprint arXiv:2310.04562}, 2023.

\bibitem{gao2023double}
Jianfei Gao, Yangze Zhou, and Bruno Ribeiro.
\newblock Double permutation equivariance for knowledge graph completion.
\newblock {\em arXiv preprint arXiv:2302.01313}, 2023.

\bibitem{gao2024two}
Yifu Gao, Linbo Qiao, Zhigang Kan, Zhihua Wen, Yongquan He, and Dongsheng Li.
\newblock Two-stage generative question answering on temporal knowledge graph using large language models.
\newblock {\em arXiv preprint arXiv:2402.16568}, 2024.

\bibitem{gao2024retrievalaugmented}
Yunfan Gao, Yun Xiong, Xinyu Gao, Kangxiang Jia, Jinliu Pan, Yuxi Bi, Yi~Dai, Jiawei Sun, Meng Wang, and Haofen Wang.
\newblock Retrieval-augmented generation for large language models: A survey.
\newblock {\em arXiv preprint arXiv:2312.10997}, 2024.

\bibitem{geng2023relational}
Yuxia Geng, Jiaoyan Chen, Jeff~Z Pan, Mingyang Chen, Song Jiang, Wen Zhang, and Huajun Chen.
\newblock Relational message passing for fully inductive knowledge graph completion.
\newblock In {\em 2023 IEEE 39th international conference on data engineering (ICDE)}, pages 1221--1233. IEEE, 2023.

\bibitem{hagberg2008exploring}
Aric Hagberg, Pieter~J Swart, and Daniel~A Schult.
\newblock Exploring network structure, dynamics, and function using networkx.
\newblock Technical report, Los Alamos National Laboratory (LANL), Los Alamos, NM (United States), 2008.

\bibitem{hamilton2017inductive}
Will Hamilton, Zhitao Ying, and Jure Leskovec.
\newblock Inductive representation learning on large graphs.
\newblock {\em Advances in neural information processing systems}, 30, 2017.

\bibitem{he2024g}
Xiaoxin He, Yijun Tian, Yifei Sun, Nitesh Chawla, Thomas Laurent, Yann LeCun, Xavier Bresson, and Bryan Hooi.
\newblock G-retriever: Retrieval-augmented generation for textual graph understanding and question answering.
\newblock {\em Advances in Neural Information Processing Systems}, 37:132876--132907, 2024.

\bibitem{hogan2021knowledge}
Aidan Hogan, Eva Blomqvist, Michael Cochez, Claudia d’Amato, Gerard~De Melo, Claudio Gutierrez, Sabrina Kirrane, Jos{\'e} Emilio~Labra Gayo, Roberto Navigli, Sebastian Neumaier, et~al.
\newblock Knowledge graphs.
\newblock {\em ACM Computing Surveys (Csur)}, 54(4):1--37, 2021.

\bibitem{huang2023towards}
Jie Huang and Kevin Chen-Chuan Chang.
\newblock Towards reasoning in large language models: A survey.
\newblock In {\em Findings of the Association for Computational Linguistics: ACL 2023}, pages 1049--1065, 2023.

\bibitem{huang2023a}
Lei Huang, Weijiang Yu, Weitao Ma, Weihong Zhong, Zhangyin Feng, Haotian Wang, Qianglong Chen, Weihua Peng, Xiaocheng Feng, Bing Qin, and Ting Liu.
\newblock A survey on hallucination in large language models: Principles, taxonomy, challenges, and open questions.
\newblock {\em arXiv preprint arXiv:2311.05232}, 2023.

\bibitem{ioffe2015batch}
Sergey Ioffe and Christian Szegedy.
\newblock Batch normalization: Accelerating deep network training by reducing internal covariate shift.
\newblock In {\em International conference on machine learning}, pages 448--456. pmlr, 2015.

\bibitem{10.1145/3571730}
Ziwei Ji, Nayeon Lee, Rita Frieske, Tiezheng Yu, Dan Su, Yan Xu, Etsuko Ishii, Ye~Jin Bang, Andrea Madotto, and Pascale Fung.
\newblock Survey of hallucination in natural language generation.
\newblock {\em ACM Computing Surveys}, 55(12), 2023.

\bibitem{jiang2023structgpt}
Jinhao Jiang, Kun Zhou, Zican Dong, Keming Ye, Wayne~Xin Zhao, and Ji-Rong Wen.
\newblock Structgpt: A general framework for large language model to reason over structured data.
\newblock In {\em Proceedings of the 2023 Conference on Empirical Methods in Natural Language Processing}, pages 9237--9251, 2023.

\bibitem{jiang2024kg}
Jinhao Jiang, Kun Zhou, Wayne~Xin Zhao, Yang Song, Chen Zhu, Hengshu Zhu, and Ji-Rong Wen.
\newblock Kg-agent: An efficient autonomous agent framework for complex reasoning over knowledge graph.
\newblock {\em arXiv preprint arXiv:2402.11163}, 2024.

\bibitem{jiang2022unikgqa}
Jinhao Jiang, Kun Zhou, Xin Zhao, and Ji-Rong Wen.
\newblock Unikgqa: Unified retrieval and reasoning for solving multi-hop question answering over knowledge graph.
\newblock In {\em The Eleventh International Conference on Learning Representations}, 2022.

\bibitem{jin2024graph}
Bowen Jin, Chulin Xie, Jiawei Zhang, Kashob~Kumar Roy, Yu~Zhang, Zheng Li, Ruirui Li, Xianfeng Tang, Suhang Wang, Yu~Meng, et~al.
\newblock Graph chain-of-thought: Augmenting large language models by reasoning on graphs.
\newblock {\em arXiv preprint arXiv:2404.07103}, 2024.

\bibitem{kasai2023realtime}
Jungo Kasai, Keisuke Sakaguchi, yoichi takahashi, Ronan~Le Bras, Akari Asai, Xinyan~Velocity Yu, Dragomir Radev, Noah~A. Smith, Yejin Choi, and Kentaro Inui.
\newblock Realtime {QA}: What's the answer right now?
\newblock In {\em Thirty-seventh Conference on Neural Information Processing Systems Datasets and Benchmarks Track}, 2023.

\bibitem{kim-etal-2023-kg}
Jiho Kim, Yeonsu Kwon, Yohan Jo, and Edward Choi.
\newblock {KG}-{GPT}: A general framework for reasoning on knowledge graphs using large language models.
\newblock In Houda Bouamor, Juan Pino, and Kalika Bali, editors, {\em Findings of the Association for Computational Linguistics: EMNLP 2023}, pages 9410--9421, Singapore, December 2023. Association for Computational Linguistics.

\bibitem{kingma2014adam}
Diederik~P Kingma and Jimmy Ba.
\newblock Adam: A method for stochastic optimization.
\newblock {\em arXiv preprint arXiv:1412.6980}, 2014.

\bibitem{kipf2016semi}
Thomas~N Kipf and Max Welling.
\newblock Semi-supervised classification with graph convolutional networks.
\newblock {\em arXiv preprint arXiv:1609.02907}, 2016.

\bibitem{lee2023ingram}
Jaejun Lee, Chanyoung Chung, and Joyce~Jiyoung Whang.
\newblock Ingram: Inductive knowledge graph embedding via relation graphs.
\newblock In {\em International Conference on Machine Learning}, pages 18796--18809. PMLR, 2023.

\bibitem{lewis2020retrieval}
Patrick Lewis, Ethan Perez, Aleksandra Piktus, Fabio Petroni, Vladimir Karpukhin, Naman Goyal, Heinrich K{\"u}ttler, Mike Lewis, Wen-tau Yih, Tim Rockt{\"a}schel, et~al.
\newblock Retrieval-augmented generation for knowledge-intensive nlp tasks.
\newblock {\em Advances in neural information processing systems}, 33:9459--9474, 2020.

\bibitem{li2024simple}
Mufei Li, Siqi Miao, and Pan Li.
\newblock Simple is effective: The roles of graphs and large language models in knowledge-graph-based retrieval-augmented generation.
\newblock {\em arXiv preprint arXiv:2410.20724}, 2024.

\bibitem{li2020distance}
Pan Li, Yanbang Wang, Hongwei Wang, and Jure Leskovec.
\newblock Distance encoding: Design provably more powerful neural networks for graph representation learning.
\newblock {\em Advances in Neural Information Processing Systems}, 33:4465--4478, 2020.

\bibitem{li2023graph}
Shiyang Li, Yifan Gao, Haoming Jiang, Qingyu Yin, Zheng Li, Xifeng Yan, Chao Zhang, and Bing Yin.
\newblock Graph reasoning for question answering with triplet retrieval.
\newblock In {\em Findings of the Association for Computational Linguistics: ACL 2023}, pages 3366--3375, 2023.

\bibitem{li2023towards}
Zehan Li, Xin Zhang, Yanzhao Zhang, Dingkun Long, Pengjun Xie, and Meishan Zhang.
\newblock Towards general text embeddings with multi-stage contrastive learning.
\newblock {\em arXiv preprint arXiv:2308.03281}, 2023.

\bibitem{liu2024explore}
Guangyi Liu, Yongqi Zhang, Yong Li, and Quanming Yao.
\newblock Explore then determine: A gnn-llm synergy framework for reasoning over knowledge graph.
\newblock {\em arXiv preprint arXiv:2406.01145}, 2024.

\bibitem{liu2025dualreasoninggnnllmcollaborative}
Guangyi Liu, Yongqi Zhang, Yong Li, and Quanming Yao.
\newblock Dual reasoning: A gnn-llm collaborative framework for knowledge graph question answering, 2025.

\bibitem{luo2024rog}
Linhao Luo, Yuan-Fang Li, Gholamreza Haffari, and Shirui Pan.
\newblock Reasoning on graphs: Faithful and interpretable large language model reasoning.
\newblock In {\em International Conference on Learning Representations}, 2024.

\bibitem{ma2024think}
Shengjie Ma, Chengjin Xu, Xuhui Jiang, Muzhi Li, Huaren Qu, and Jian Guo.
\newblock Think-on-graph 2.0: Deep and interpretable large language model reasoning with knowledge graph-guided retrieval.
\newblock {\em arXiv e-prints}, pages arXiv--2407, 2024.

\bibitem{manning2024automated}
Benjamin~S Manning, Kehang Zhu, and John~J Horton.
\newblock Automated social science: Language models as scientist and subjects.
\newblock Technical report, National Bureau of Economic Research, 2024.

\bibitem{mavromatis2022rearev}
Costas Mavromatis and George Karypis.
\newblock Rearev: Adaptive reasoning for question answering over knowledge graphs.
\newblock In {\em Findings of the Association for Computational Linguistics: EMNLP 2022}, pages 2447--2458, 2022.

\bibitem{mavromatis2024gnn}
Costas Mavromatis and George Karypis.
\newblock Gnn-rag: Graph neural retrieval for large language model reasoning.
\newblock {\em arXiv preprint arXiv:2405.20139}, 2024.

\bibitem{llama3}
Meta.
\newblock Build the future of ai with meta llama 3, 2024.

\bibitem{mikolov2013efficient}
Tomas Mikolov, Kai Chen, Greg Corrado, and Jeffrey Dean.
\newblock Efficient estimation of word representations in vector space.
\newblock {\em arXiv preprint arXiv:1301.3781}, 2013.

\bibitem{chatgpt}
OpenAI.
\newblock Introducing chatgpt, 2022.

\bibitem{gpt4o}
OpenAI.
\newblock Hello gpt-4o, 2024.

\bibitem{paszke2019pytorchimperativestylehighperformance}
Adam Paszke, Sam Gross, Francisco Massa, Adam Lerer, James Bradbury, Gregory Chanan, Trevor Killeen, Zeming Lin, Natalia Gimelshein, Luca Antiga, Alban Desmaison, Andreas Köpf, Edward Yang, Zach DeVito, Martin Raison, Alykhan Tejani, Sasank Chilamkurthy, Benoit Steiner, Lu~Fang, Junjie Bai, and Soumith Chintala.
\newblock Pytorch: An imperative style, high-performance deep learning library, 2019.

\bibitem{robinson2015graph}
Ian Robinson, Jim Webber, and Emil Eifrem.
\newblock {\em Graph databases: new opportunities for connected data}.
\newblock " O'Reilly Media, Inc.", 2015.

\bibitem{shuster-etal-2021-retrieval-augmentation}
Kurt Shuster, Spencer Poff, Moya Chen, Douwe Kiela, and Jason Weston.
\newblock Retrieval augmentation reduces hallucination in conversation.
\newblock In {\em Findings of the Association for Computational Linguistics: EMNLP 2021}, pages 3784--3803, 2021.

\bibitem{srivastava2014dropout}
Nitish Srivastava, Geoffrey Hinton, Alex Krizhevsky, Ilya Sutskever, and Ruslan Salakhutdinov.
\newblock Dropout: a simple way to prevent neural networks from overfitting.
\newblock {\em The journal of machine learning research}, 15(1):1929--1958, 2014.

\bibitem{sunthink}
Jiashuo Sun, Chengjin Xu, Lumingyuan Tang, Saizhuo Wang, Chen Lin, Yeyun Gong, Lionel Ni, Heung-Yeung Shum, and Jian Guo.
\newblock Think-on-graph: Deep and responsible reasoning of large language model on knowledge graph.
\newblock In {\em The Twelfth International Conference on Learning Representations}, 2024.

\bibitem{talmor2018web}
Alon Talmor and Jonathan Berant.
\newblock The web as a knowledge-base for answering complex questions.
\newblock In {\em Proceedings of the 2018 Conference of the North American Chapter of the Association for Computational Linguistics: Human Language Technologies, Volume 1 (Long Papers)}, pages 641--651, 2018.

\bibitem{touvron2023llama}
Hugo Touvron, Louis Martin, Kevin Stone, Peter Albert, Amjad Almahairi, Yasmine Babaei, Nikolay Bashlykov, Soumya Batra, Prajjwal Bhargava, Shruti Bhosale, et~al.
\newblock Llama 2: Open foundation and fine-tuned chat models.
\newblock {\em arXiv preprint arXiv:2307.09288}, 2023.

\bibitem{velivckovic2017graph}
Petar Veli{\v{c}}kovi{\'c}, Guillem Cucurull, Arantxa Casanova, Adriana Romero, Pietro Lio, and Yoshua Bengio.
\newblock Graph attention networks.
\newblock {\em arXiv preprint arXiv:1710.10903}, 2017.

\bibitem{velickovic2017graph}
Petar Velickovic, Guillem Cucurull, Arantxa Casanova, Adriana Romero, Pietro Lio, Yoshua Bengio, et~al.
\newblock Graph attention networks.
\newblock {\em stat}, 1050(20):10--48550, 2017.

\bibitem{wang2023knowledge}
Keheng Wang, Feiyu Duan, Sirui Wang, Peiguang Li, Yunsen Xian, Chuantao Yin, Wenge Rong, and Zhang Xiong.
\newblock Knowledge-driven cot: Exploring faithful reasoning in llms for knowledge-intensive question answering.
\newblock {\em arXiv preprint arXiv:2308.13259}, 2023.

\bibitem{wang2024knowledge}
Yu~Wang, Nedim Lipka, Ryan~A Rossi, Alexa Siu, Ruiyi Zhang, and Tyler Derr.
\newblock Knowledge graph prompting for multi-document question answering.
\newblock In {\em Proceedings of the AAAI Conference on Artificial Intelligence}, volume~38, pages 19206--19214, 2024.

\bibitem{wei2022chain}
Jason Wei, Xuezhi Wang, Dale Schuurmans, Maarten Bosma, brian ichter, Fei Xia, Ed~H. Chi, Quoc~V Le, and Denny Zhou.
\newblock Chain of thought prompting elicits reasoning in large language models.
\newblock In {\em Advances in Neural Information Processing Systems}, 2022.

\bibitem{wen2023mindmap}
Yilin Wen, Zifeng Wang, and Jimeng Sun.
\newblock Mindmap: Knowledge graph prompting sparks graph of thoughts in large language models.
\newblock {\em arXiv preprint arXiv:2308.09729}, 2023.

\bibitem{wu2024survey}
Likang Wu, Zhi Zheng, Zhaopeng Qiu, Hao Wang, Hongchao Gu, Tingjia Shen, Chuan Qin, Chen Zhu, Hengshu Zhu, Qi~Liu, et~al.
\newblock A survey on large language models for recommendation.
\newblock {\em World Wide Web}, 27(5):60, 2024.

\bibitem{wu2023retrieverewriteanswerkgtotextenhancedllms}
Yike Wu, Nan Hu, Sheng Bi, Guilin Qi, Jie Ren, Anhuan Xie, and Wei Song.
\newblock Retrieve-rewrite-answer: A kg-to-text enhanced llms framework for knowledge graph question answering, 2023.

\bibitem{xu2018how}
Keyulu Xu, Weihua Hu, Jure Leskovec, and Stefanie Jegelka.
\newblock How powerful are graph neural networks?
\newblock In {\em International Conference on Learning Representations}, 2019.

\bibitem{qwen2}
An~Yang, Baosong Yang, Binyuan Hui, Bo~Zheng, Bowen Yu, Chang Zhou, Chengpeng Li, Chengyuan Li, Dayiheng Liu, Fei Huang, Guanting Dong, Haoran Wei, Huan Lin, Jialong Tang, Jialin Wang, Jian Yang, Jianhong Tu, Jianwei Zhang, Jianxin Ma, Jin Xu, Jingren Zhou, Jinze Bai, Jinzheng He, Junyang Lin, Kai Dang, Keming Lu, Keqin Chen, Kexin Yang, Mei Li, Mingfeng Xue, Na~Ni, Pei Zhang, Peng Wang, Ru~Peng, Rui Men, Ruize Gao, Runji Lin, Shijie Wang, Shuai Bai, Sinan Tan, Tianhang Zhu, Tianhao Li, Tianyu Liu, Wenbin Ge, Xiaodong Deng, Xiaohuan Zhou, Xingzhang Ren, Xinyu Zhang, Xipin Wei, Xuancheng Ren, Yang Fan, Yang Yao, Yichang Zhang, Yu~Wan, Yunfei Chu, Yuqiong Liu, Zeyu Cui, Zhenru Zhang, and Zhihao Fan.
\newblock Qwen2 technical report.
\newblock {\em arXiv preprint arXiv:2407.10671}, 2024.

\bibitem{yao2024tree}
Shunyu Yao, Dian Yu, Jeffrey Zhao, Izhak Shafran, Tom Griffiths, Yuan Cao, and Karthik Narasimhan.
\newblock Tree of thoughts: Deliberate problem solving with large language models.
\newblock {\em Advances in Neural Information Processing Systems}, 36, 2024.

\bibitem{yih2016value}
Wen-tau Yih, Matthew Richardson, Christopher Meek, Ming-Wei Chang, and Jina Suh.
\newblock The value of semantic parse labeling for knowledge base question answering.
\newblock In {\em Proceedings of the 54th Annual Meeting of the Association for Computational Linguistics (Volume 2: Short Papers)}, pages 201--206, 2016.

\bibitem{zhang2022subgraph}
Jing Zhang, Xiaokang Zhang, Jifan Yu, Jian Tang, Jie Tang, Cuiping Li, and Hong Chen.
\newblock Subgraph retrieval enhanced model for multi-hop knowledge base question answering.
\newblock In {\em Proceedings of the 60th Annual Meeting of the Association for Computational Linguistics (Volume 1: Long Papers)}, pages 5773--5784, 2022.

\bibitem{zhang2021labeling}
Muhan Zhang, Pan Li, Yinglong Xia, Kai Wang, and Long Jin.
\newblock Labeling trick: A theory of using graph neural networks for multi-node representation learning.
\newblock {\em Advances in Neural Information Processing Systems}, 34:9061--9073, 2021.

\bibitem{zhou2023ood}
Jincheng Zhou, Beatrice Bevilacqua, and Bruno Ribeiro.
\newblock An ood multi-task perspective for link prediction with new relation types and nodes.
\newblock {\em arXiv preprint arXiv:2307.06046}, 23, 2023.

\end{thebibliography}
}
%%%%%%%%%%%%%%%%%%%%%%%%%%%%%%%%%%%%%%%%%%%%%%%%%%%%%%%%%%%%
\appendix
\newpage
% \addcontentsline{toc}{section}{Appendix}
\begin{center}
	\LARGE \bf {Appendix}
\end{center}

\etocdepthtag.toc{mtappendix}
\etocsettagdepth{mtchapter}{none}
\etocsettagdepth{mtappendix}{subsection}
\tableofcontents
\newpage

\section{Broad Impact}
This work explores the integration of KGs with LLMs for the task of KGQA, with a focus on improving retrieval quality and generalization through graph-based retrievers. By grounding LLMs in structured relational knowledge and employing a more interpretable and generalizable retrieval model, our proposed framework aims to significantly mitigate hallucination and improve factual consistency.

The potential societal benefits of this research are considerable. More faithful and robust KG-augmented language systems could enhance the reliability of AI assistants in high-stakes domains such as scientific research, healthcare, and legal reasoning—where factual accuracy and interpretability are critical. In particular, reducing hallucinations in LLM outputs can support safer deployment of AI systems in real-world applications.
Moreover, our method is efficient at inference stage, and does not rely on large-scale supervised training or fine-tuning of LLMs, making it more accessible and sustainable, especially in resource-constrained environments. Additionally, by supporting smaller LLMs with strong retrieval capabilities, our work helps democratize access to high-performance question answering systems without requiring access to proprietary or computationally expensive language models for knowledge retrieval.

\section{Related Work}
\label{related_work}
\tb{Retrieve-then-reasoning paradigm}.
In KG-based RAG, a substantial number of methods adopt the \textit{retrieve-then-reasoning} paradigm~\cite{li2023graph,kim-etal-2023-kg,liu2025dualreasoninggnnllmcollaborative,wu2023retrieverewriteanswerkgtotextenhancedllms,wen2023mindmap,mavromatis2024gnn,li2024simple,luo2024rog,mavromatis2022rearev}, wherein a retriever first extracts relevant triples from the knowledge graph, followed by a (LLM-based) that generates the final answer based on the retrieved information. The retriever can be broadly categorized into LLM-based and graph-based approaches.
LLM-based retrievers benefit from large model capacity and strong semantic understanding. However, they also suffer from hallucination issues, and high computational cost and latency. To address this, recent work has proposed lightweight GNN-based graph retrievers~\cite{mavromatis2022rearev,li2024simple,zhang2022subgraph,mavromatis2022rearev} that operate directly on the KG structure. These methods have achieved superior performance on KGQA benchmarks, benefiting from the strong reasoning and denoising capabilities of downstream LLM-based reasoners. While graph-based retrievers offer substantial computational advantages and mitigating hallucinations, they still face generalization challenges~\cite{li2024simple}. In this work, we aim to retain the efficiency of graph-based retrievers while  enhancing their expressivity and generalization ability through a model-agnostic design tailored for characteristics of knowledge graphs and KGQA task.

\tb{KG-based agentic RAG}. 
Another line of research leverages LLMs as agents that iteratively explore the knowledge graph to retrieve relevant information~\cite{gao2024two,wang2024knowledge,jiang2024kg,sunthink,chen2024plan,ma2024think,jin2024graph}. In this setting, the agent integrates both retrieval and reasoning capabilities, enabling more adaptive knowledge access. While this approach has demonstrated effectiveness in identifying relevant triples, the iterative exploration process incurs substantial latency, as well as computational costs due to repeated LLM calls. In contrast, our method adopts a lightweight graph-based retriever to balance efficiency and effectiveness for KGQA.

\section{Complexity Analysis}
\label{app:complexity}
\tb{Preprocessing.} Given a graph $\mcal{G} = (\mcal{V}, \mcal{E})$ with $|\mcal{V}|$ nodes and $|\mcal{E}|$ edges,  The time complexity of this transformation is $\mathcal{O}(|\mcal{E}| d_{\max})$, where $d_{\max}$ is the maximum node degree in $\mcal{G}$. The space complexity is $\mathcal{O}(|\mcal{E}| + |\mcal{E}'|)$, where $|\mcal{E}'|$ denotes the number of edges in the resulting line graph $\mcal{G}'$, typically on the order of $\mathcal{O}(|E| d_{avg})$, with $d_{avg}$ as the average node degree.

\tb{Model training and inference.} For both training and inference, with a $K$-layer GCN operating on the line graph $\mcal{G}_q'$, the time complexity is $\mathcal{O}(K|\mcal{E}'| F)$, where $F$ is the dimensionality of node features. The space complexity is $\mathcal{O}(|\mcal{V}'|F + |\mcal{E}'|)$, where $|V'|$ is the number of triples (i.e., nodes in the line graph). During model training and inference, it does not involve any LLM call for the retriever. 

\section{More Discussions on Line Graph Transformation}
\label{discuss_graph_trasnform}
In this section, we discuss the limitations of additional message-passing GNNs when applied to the original knowledge graph, and elaborate on how these limitations can be addressed through line graph transformation.

\tb{GraphSage~\cite{hamilton2017inductive}}. The upadte function of GraphSage is shown in Eqn.~\ref{eq:sage_edge_concat}. 
\begin{equation}\label{eq:sage_edge_concat}
  \mathbf m_{ij}^{(k)}
    = W_{\text{msg}}^{(k)}
      \bigl[\,
        \mathbf h_j^{(k-1)} \,\Vert\, \mathbf r_{ij}
      \bigr],
  \qquad
  \mathbf h_i^{(k)}
    = \sigma\!\Bigl(
        W_{\text{self}}^{(k)}\mathbf h_i^{(k-1)}
        \;+\;
        \mathrm{MEAN}\!\bigl\{
          \mathbf m_{ij}^{(k)}
        \bigr\}
      \Bigr).
\end{equation}
As seen, only the entity states $\mathbf h_i^{(k)}$ are upadted, and relation embeddings $\mathbf r_{ij}$ remain static, so the model cannot refine relation semantics nor capture cross-triple interactions. 
After converting to the line graph $\mathcal G'_q$, the propagation is performed between triples, enabling explicit inter-triple reasoning.
\begin{equation}\label{eq:sage_line_concat}
  \mathbf h_u^{(k)}
    = \sigma\!\Bigl(
        W_{\text{self}}^{(k)}
        \,\phi\bigl(\mathbf e_u,\mathbf r_u,\mathbf e_u'\bigr)
        \;+\;
        W_{\text{nbr}}^{(k)}
        \frac{1}{|\mathcal N_\ell(u)|}
        \sum_{v\in\mathcal N_\ell(u)}
          \phi\bigl(\mathbf e_v,\mathbf r_v,\mathbf e_v'\bigr)
      \Bigr),
\end{equation}

\tb{Graph Attention Network (GAT)~\cite{velivckovic2017graph}}. The upadte function of GAT is shown in Eqn.~\ref{eq:gat_edge_concat}. Similarly $\mathbf r_{ij}$ only modulates one-hop attention, therefore the relational embedding is not updated, only the entity embedding is learned. 
\begin{align}\label{eq:gat_edge_concat}
  \alpha_{ij}^{(k)}
    &= \mathrm{softmax}_{j}\!
       \Bigl(
         a^\top
         \bigl[
           W \mathbf h_i^{(k-1)} \,\Vert\,
           W \mathbf h_j^{(k-1)} \,\Vert\,
           W_e \mathbf r_{ij}
         \bigr]
       \Bigr), \nonumber\\
  \mathbf h_i^{(k)}
    &= \sigma\!\Bigl(
        \sum_{j\in\mathcal N(i)}
          \alpha_{ij}^{(k)}\;
          V\bigl[\,
            \mathbf h_j^{(k-1)} \,\Vert\, \mathbf r_{ij}
          \bigr]
      \Bigr).
\end{align}
After the line-graph transform, Attention is now computed between triples, so both intra-triple composition and inter-triple dependencies influence edge-aware attention.

\begin{equation}
  \alpha_{ij}
  \;=\;
  \operatorname{softmax}_{h_j}
  \!\bigl(
    a^\top\!
    \bigl[
      W\phi(\mathbf e_i,\mathbf r_i,\mathbf e_i')
      \;\Vert\;
      W\phi(\mathbf e_j,\mathbf r_j,\mathbf e_j')
    \bigr]
  \bigr),
  \qquad
  \rvh_{i}^{(k)}
  \;=\;
  \sigma\!\Bigl(
    \sum_{h_j\in\mathcal N_\ell(h_i)}
      \alpha_{ij}\,
      W\phi(\mathbf e_j,\mathbf r_j,\mathbf e_j')
  \Bigr).
\end{equation}
Similarly, for all message-passing GNNs that follow the general \textit{message–aggregation–update} paradigm (Eq.~\ref{eq:general_gnn}), the relational structure in the original knowledge (sub)graph cannot be fully exploited. In contrast, the line graph transformation offers a model-agnostic solution that enables richer modeling of both intra- and inter-triple interactions, facilitating more expressive relational reasoning. This transformation is broadly applicable across a wide range of GNN architectures.
\begin{equation}
\begin{aligned}
\label{eq:general_gnn}
% ----------------------------------------------------
% 1)  Message construction
\mathbf m_{ij}^{(k)}
  &= \mathcal M^{(k)}
     \!\bigl(
       \mathbf h_i^{(k-1)},\;
       \mathbf h_j^{(k-1)},\;
       \mathbf r_{ij}
     \bigr),
     &&\forall\, j\!\in\!\mathcal N(i),
\\[4pt]
% ----------------------------------------------------
% 2)  Neighbour aggregation
\mathbf a_i^{(k)}
  &= \mathcal A^{(k)}
     \!\Bigl(
       \bigl\{\mathbf m_{ij}^{(k)} : j\!\in\!\mathcal N(i)\bigr\}
     \Bigr),
\\[4pt]
% ----------------------------------------------------
% 3)  Node/edge update
\mathbf h_i^{(k)}
  &= \mathcal U^{(k)}
     \!\bigl(
       \mathbf h_i^{(k-1)},\;
       \mathbf a_i^{(k)}
     \bigr),
\end{aligned}
\end{equation}

\section{Algorithmic Pseudocode}
\label{pseudocode}
The overall preprocessing and training procedure is illustrated in Algorithm~\ref{alg:proj}.

\begin{algorithm}[t]
\caption{Overall Procedure of \proj}
\label{alg:proj}
\begin{algorithmic}[1]
\Require Training set $\mathcal{D}=\{(q,e_q,e_a,\mathcal{G}_q)\}$, epochs $E$, learning rate $\eta$, hyper‑parameters $\lambda_q,\lambda_{path}$
\Ensure  Optimized retriever parameters $\theta=\{\overrightarrow{\theta},\overleftarrow{\theta}\}$ and $\phi$
\Statex\textbf{Preprocessing}
\ForAll{$(q,e_q,e_a,\mathcal{G}_q)\in\mathcal{D}$}
    \State Construct line graph $\mathcal{G}'_q \gets \textsc{LineGraph}(\mathcal{G}_q)$
    \State Generate candidate paths $\mathcal{P}_{cand} \gets \textsc{GenPaths}(\mathcal{G}_q,d_{\min},d_{\min}{+}2)$
    \State Rationalize labels $\mathcal{Y}_q \gets \gamma(q,\mathcal{P}_{cand})$ \Comment{LLM call}
    \State Relation targeting $\mathcal{R}_* \gets \gamma(q,\textsc{Relations}(\mathcal{G}_q))$ \Comment{LLM call}
\EndFor
\Statex\textbf{Initialization}
\State Initialize bi‑directional GCN encoders $f_{\overrightarrow{\theta}},\,f_{\overleftarrow{\theta}}$ and STOP MLP $g_\phi$
\Statex\textbf{Training}
\For{$e=1$ \textbf{to} $E$}
    \ForAll{minibatch $\mathcal{B}\subset\mathcal{D}$}
        \ForAll{$(q,\mathcal{G}'_q,\mathcal{Y}_q,\mathcal{R}_*)\in\mathcal{B}$}
            \State Encode nodes $\mathbf{z}_i \gets 
            \tfrac12\!\bigl(f_{\overrightarrow{\theta}}(v_i)+
            f_{\overleftarrow{\theta}}(v_i)\bigr)$
            \State Build $\mathcal{V}_{pos},\mathcal{V}_{neg}$ using $\mathcal{R}_*$   
            \State $\mathcal{L}_q \gets \textsc{NegSampleLoss}(\theta,\phi;\mathcal{V}_{pos},\mathcal{V}_{neg})$ \Comment{Eq.~\eqref{eq:L_q_neg_sampling}}
            \State $\mathcal{L}_{path} \gets \textsc{PathLoss}(\theta,\phi;q,\mathcal{Y}_q)$  \Comment{Eq.~\eqref{path_loss}}
        \EndFor
        \State $\mathcal{L} \gets \lambda_q\mathcal{L}_q + \lambda_{path}\mathcal{L}_{path}$ \Comment{$\lambda_q$, $\lambda_{path}$ default to $1.0$}
        \State Update $\theta,\phi$ via Adam with step size $\eta$
    \EndFor
\EndFor
\end{algorithmic}
\end{algorithm}

\section{Theoretical Properties of the Directed Line Graph}
Let $\mathcal{G}=(\mcal{V},\mcal{E})$ be a finite directed graph.
Its directed line graph is denoted by $l(\mathcal{G})=(\mcal{V}_{\ell},\mcal{E}_{\ell})$,
where every node $x\in\mcal{V}_{\ell}$ corresponds to a directed edge
$e_x=(u,v)\in\mcal{E}$, and there is an edge $(x,y)\in\mcal{E}_{\ell}$
iff the head of $e_x$ equals the tail of $e_y$.

\vspace{4pt}
\begin{proposition}[Bijective mapping of directed paths]
\label{prop:path-bijection}
Let
\[
P \;=\;
\bigl(
  v_0 \xrightarrow{\,e_1\,} v_1 \xrightarrow{\,e_2\,} \dots
  \xrightarrow{\,e_k\,} v_k
\bigr),\quad k\ge 1,
\]
be a directed path of length $k$ in $\mathcal{G}$, where
$e_i=(v_{\,i-1},v_i)\in\mcal{E}$.
Define
\[
\Phi(P)
\;=\;
\bigl(e_1 \xrightarrow{} e_2 \xrightarrow{}\dots\xrightarrow{} e_k\bigr),
\]
regarding each $e_i$ as its corresponding node in $l(\mathcal{G})$.
Then $\Phi$ is a bijection between the set of directed paths of length $k$ in $\mathcal{G}$, and
the set of directed paths of length $k-1$ in $l(\mathcal{G})$.
\end{proposition}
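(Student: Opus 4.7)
The strategy is to verify in turn that $\Phi$ is (i) well-defined as a map into directed paths of $l(\mathcal{G})$ of length $k-1$, (ii) injective, and (iii) surjective; a direct construction of the inverse will handle (ii) and (iii) simultaneously. The only content of the statement is careful bookkeeping of incidence, so I will structure the plan around making the definition of $l(\mathcal{G})$ work at each step.

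\textbf{Well-definedness.} Given $P=(v_0\xrightarrow{e_1}v_1\xrightarrow{e_2}\dots\xrightarrow{e_k}v_k)$ with $e_i=(v_{i-1},v_i)$, I would first note that each $e_i\in\mcal{E}$ corresponds to a unique node of $l(\mathcal{G})$ by definition. For each consecutive pair $(e_i,e_{i+1})$ with $1\le i\le k-1$, the head of $e_i$ is $v_i$ and the tail of $e_{i+1}$ is $v_i$, so by the edge rule of the directed line graph, there is a directed edge from $e_i$ to $e_{i+1}$ in $l(\mathcal{G})$. Hence $\Phi(P)$ is a valid directed walk of length $k-1$; because edges of $\mathcal{G}$ are distinct elements of $\mcal{E}$ (and thus distinct nodes of $l(\mathcal{G})$) along $P$ — which holds if we take ``path'' in the standard sense of a walk with distinct edges, or we simply inherit the walk structure — $\Phi(P)$ is a directed path of length $k-1$ in $l(\mathcal{G})$.

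\textbf{Injectivity and surjectivity via an explicit inverse.} The cleanest route is to construct $\Phi^{-1}$. Given a directed path $Q=(x_1\to x_2\to\dots\to x_k)$ of length $k-1$ in $l(\mathcal{G})$, each $x_i$ corresponds to a unique directed edge $e_i=(u_i,w_i)\in\mcal{E}$. By the line-graph edge rule, the existence of an arc $x_i\to x_{i+1}$ in $l(\mathcal{G})$ forces $w_i=u_{i+1}$ for every $1\le i\le k-1$. Setting $v_0:=u_1$ and $v_i:=w_i$ for $1\le i\le k$ then yields a well-defined sequence of vertices of $\mathcal{G}$ in which each consecutive pair $(v_{i-1},v_i)$ is joined by the edge $e_i$. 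Define $\Psi(Q):=(v_0\xrightarrow{e_1}v_1\xrightarrow{e_2}\dots\xrightarrow{e_k}v_k)$. Direct substitution gives $\Phi\circ\Psi=\mathrm{id}$ on paths of length $k-1$ in $l(\mathcal{G})$, and $\Psi\circ\Phi=\mathrm{id}$ on paths of length $k$ in $\mathcal{G}$, because in both directions the sequence of edges determines the sequence of endpoints and conversely. This establishes that $\Phi$ is a bijection.

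\textbf{Main obstacle.} There is no deep obstruction; the only subtlety I expect is a convention issue about the definition of ``path'' (walk versus simple path versus edge-simple trail) and the degenerate case $k=1$, where $\Phi(P)$ is a single node of $l(\mathcal{G})$, i.e.\ a path of length $0$. I would address the convention upfront by fixing ``directed path of length $k$'' to mean a sequence of $k$ directed edges with matching incidences (consistent with the paper's usage in the reasoning-path definition), and explicitly treat $k=1$ as the base case where $\Phi$ reduces to the identification $e_1\leftrightarrow x_1$ between $\mcal{E}$ and $\mcal{V}_\ell$, which is a bijection by the very definition of $l(\mathcal{G})$.
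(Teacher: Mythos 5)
Your proof is correct and follows essentially the same route as the paper's: the well-definedness check is identical, and your explicit inverse $\Psi$ is just the paper's surjectivity construction packaged so that it also delivers injectivity (which the paper argues separately, relying implicitly on the fact that the edge sequence determines the vertex sequence — a point your inverse makes explicit). The added remarks on path conventions and the $k=1$ case are sensible refinements but do not change the argument.
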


\begin{proof}
\emph{Well‑definedness.}  
For consecutive edges $e_i=(v_{i-1},v_i)$ and $e_{i+1}=(v_i,v_{i+1})$
the shared endpoint is $v_i$ with consistent orientation, hence
$(e_i,e_{i+1})\in\mcal{E}_{\ell}$.
Thus $\Phi(P)$ is a valid path of length $k-1$ in~$l(\mathcal{G})$.

\emph{Injectivity.}  
If two original paths $P\neq P'$ differ at position $j$, then
$\Phi(P)$ and $\Phi(P')$ differ at node~$j$, so $\Phi(P)\neq\Phi(P')$.

\emph{Surjectivity.}  
Take any path $(x_1\!\to x_2\!\to\dots\to x_k)$ in $l(\mathcal{G})$ and
let $e_i$ be the edge in $\mcal{E}$ represented by $x_i$.
Because $(x_i,x_{i+1})\in\mcal{E}_{\ell}$, the head of $e_i$ is the tail
of $e_{i+1}$.  Hence $(e_1,\dots,e_k)$ forms a length‑$k$ path in
$\mathcal{G}$ whose image under~$\Phi$ is the given path.
\end{proof}

\vspace{4pt}
\begin{proposition}[Path‑length reduction]
\label{prop:length-reduction}
Let $u,v\in\mcal{V}$ be connected in $\mathcal{G}$ by a shortest
directed path of length $d\ge 1$,
\(
u=v_0\!\xrightarrow{e_1}\!\dots\xrightarrow{e_d} v=v_d.
\)
Let $x_u$ and $x_v$ be the nodes of $\mcal{V}_{\ell}$ corresponding to
the first edge $e_1$ and the last edge $e_d$, respectively.
Then the distance between $x_u$ and $x_v$ in $l(\mathcal{G})$ is
exactly $d-1$, and no shorter path exists in $l(\mathcal{G})$.
\end{proposition}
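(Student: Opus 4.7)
The plan is to leverage Proposition~\ref{prop:path-bijection} to transfer the problem back and forth between paths in $\mathcal{G}$ and paths in $l(\mathcal{G})$, using the bijection $\Phi$ in one direction for the upper bound and its inverse for the lower bound.

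\textbf{Upper bound.} First I would apply $\Phi$ to the given shortest path $u=v_0\xrightarrow{e_1}\dots\xrightarrow{e_d}v$. By Proposition~\ref{prop:path-bijection}, this yields a directed path $(x_u=e_1\to e_2\to\dots\to e_d=x_v)$ of length $d-1$ in $l(\mathcal{G})$. Hence the line-graph distance from $x_u$ to $x_v$ is at most $d-1$. The edge case $d=1$ is immediate: $x_u=x_v$ and the bound reads $0\le 0$.

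\textbf{Lower bound.} For the matching lower bound, I would argue by contradiction. Suppose there exists a path $(x_u=y_1\to y_2\to\dots\to y_{\ell+1}=x_v)$ in $l(\mathcal{G})$ of length $\ell<d-1$. Applying the surjectivity part of Proposition~\ref{prop:path-bijection} (equivalently, reading off the edge $\widetilde e_i\in\mcal{E}$ represented by each $y_i$), I obtain a directed path
\[
u=\tilde v_0 \xrightarrow{\widetilde e_1}\tilde v_1\xrightarrow{\widetilde e_2}\dots\xrightarrow{\widetilde e_{\ell+1}}\tilde v_{\ell+1}=v
\]
in $\mathcal{G}$ of length $\ell+1<d$. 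Here the endpoints are forced to be $u$ and $v$ because $\widetilde e_1=e_1$ has tail $u$ and $\widetilde e_{\ell+1}=e_d$ has head $v$. This contradicts the assumption that $d$ is the length of a shortest directed $u\!\to\!v$ path in $\mathcal{G}$, so no path of length smaller than $d-1$ can exist in $l(\mathcal{G})$ between $x_u$ and $x_v$. Combining both bounds yields $\operatorname{dist}_{l(\mathcal{G})}(x_u,x_v)=d-1$ exactly.

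\textbf{Main obstacle.} The only delicate point is ensuring that the reconstructed path in $\mathcal{G}$ genuinely starts at $u$ and ends at $v$ rather than at some other vertex incident to $e_1$ or $e_d$. This is handled by noting that the directed line-graph construction records the orientation of each edge, so the tail of $\widetilde e_1$ and the head of $\widetilde e_{\ell+1}$ are uniquely determined. Once this is made explicit, the contradiction with the shortest-path length $d$ follows immediately, and no further case analysis is required.
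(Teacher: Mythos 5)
Your proposal is correct and follows essentially the same route as the paper's proof: the upper bound via the image of the shortest path under $\Phi$, and the lower bound by contradiction using the surjectivity of $\Phi$ to pull a shorter path back to $\mathcal{G}$. Your explicit check that the reconstructed path's endpoints are forced to be $u$ and $v$ is a welcome refinement of a step the paper leaves implicit, but it does not change the argument.
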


\begin{proof}
By Proposition~\ref{prop:path-bijection}, the path
$(e_1,e_2,\dots,e_d)$ is a directed path of length $d-1$ from
$x_u$ to $x_v$ in $l(\mathcal{G})$.
Assume for contradiction that there is a shorter path of length
$\ell<d-1$ between $x_u$ and $x_v$ in $\mcal{E}_{\ell}$.
The surjectivity of $\Phi$ then yields a directed path of length
$\ell+1<d$ from $u$ to $v$ in $\mathcal{G}$, contradicting the
minimality of $d$.
Hence the distance in $l(\mathcal{G})$ equals $d-1$.
\end{proof}

\noindent
Propositions~\ref{prop:path-bijection} confirms that there is a one-to-one mapping of reasoning paths between $l(\mcal{G})$ and $\mcal{G}$, therefore we can perform path-based learning and inference in the line graph. Propositions~\ref{prop:length-reduction} implies that a $K$-layer GNN model in $l(\mcal{G})$ is equivalent in the receptive field as a $(K+1)$-layer GNN model in graph $G$, therefore the receptive field of a GNN model increases via line graph transformation.

\section{Datasets}
\label{dataset}
WebQSP is a benchmark dataset for KGQA, derived from the original WebQuestions dataset~\cite{berant2013semantic}. It comprises 4,737 natural language questions annotated with full semantic parses in the form of SPARQL queries executable against Freebase. The dataset emphasizes single-hop questions, typically involving a direct relation between the question and answer entities.

CWQ dataset extends the WebQSP dataset to address more challenging multi-hop question answering scenarios. It contains 34,689 complex questions that require reasoning over multiple facts and relations. Each question is paired with a SPARQL query and corresponding answers, facilitating evaluation in both semantic parsing and information retrieval contexts. The datasets statistics can be found in Table~\ref{tab:dataset_stats}. 

\begin{table}[!t]
\centering
\caption{Dataset statistics and distribution of answer set sizes.}
\label{tab:dataset_stats}
\begin{adjustbox}{width=0.9\textwidth}
\begin{tabular}{c|cc|cccc}
\toprule
\multirow{2}{*}{Dataset} & \multicolumn{2}{c|}{Dataset Size} & \multicolumn{4}{c}{Distribution of Answer Set Size} \\
\cmidrule{2-7}
 & \#Train & \#Test & $\#\text{Ans}=1$ & $2 \leq \#\text{Ans} \leq 4$ & $5 \leq \#\text{Ans} \leq 9$ & $\#\text{Ans} \geq 10$ \\
\midrule
WebQSP & 2,826 & 1,628 & 51.2\% & 27.4\% & 8.3\% & 12.1\% \\
CWQ    & 27,639 & 3,531 & 70.6\% & 19.4\% & 6.0\% & 4.0\% \\
\bottomrule
\end{tabular}
\end{adjustbox}
\end{table}

Following previous practice, we adopt the same training and test split, with the same subgraph construction for each question-answer pair to ensure fairness~\cite{jiang2022unikgqa,luo2024rog,li2024simple,mavromatis2024gnn}.

\section{More Details on Experimental Setup and Implementations}
\label{app:exp}
\tb{Setup.} For model training, we employ two 2-layer GCNs to enable bidirectional message passing. Each GCN has a hidden dimension of 512. We use the Adam optimizer~\cite{kingma2014adam} with a learning rate of $1\times10^{-3}$, and a batch size of 10. Batch normalization~\cite{ioffe2015batch} is not used, as we observe gradient instability when it is applied. The graph retriever is trained for 15 epochs on both datasets, and model selection is performed using cross-validation based on the validation loss. A dropout rate of 0.2~\cite{srivastava2014dropout} is applied for regularization. When multiple valid paths are available, we randomly sample one as the ground-truth supervision signal at each training step.

For evaluating KGQA performance on the test sets, we first generate answers using the downstream reasoner, followed by answer verification using GPT-4o-mini. To prevent potential information leakage, we decouple the answer generation and verification processes, avoiding inclusion of ground-truth answers in the generation prompts. For answer verification, we adopt chain-of-thought prompting~\cite{wei2022chain} to ensure accurate estimation of Macro-F1 and Hit metrics.

\tb{Implementation.} We utilize \ti{networkx}\cite{hagberg2008exploring} for performing line graph transformations and explore all paths between question entities (source nodes) and answer entities (target nodes), and GPT-4o-mini is used during preprocessing for relation targeting. Our remaining implementations are based on PyTorch\cite{paszke2019pytorchimperativestylehighperformance} and PyTorch Geometric~\cite{fey2019fast}.

\section{Efficiency Analysis}
We evaluate the efficiency of the proposed method and baselines based on three metrics: average runtime, average $\#$ LLM calls, and average $\#$ retrieved triples. As shown in Table~\ref{tab:efficiency}, agentic RAG methods (e.g., ToG) incur significantly higher latency and computational cost due to repeated LLM invocations. Among KG-based RAG methods, approaches employing LLM-based retrievers generally require more time than graph-based retrievers due to the LLM inference. Although GNN-RAG and SubgraphRAG exhibit comparable runtime and LLM calls to \proj, our method is more effective thanks to its design choices specifically tailored for KGQA. Furthermore, \proj{} retrieves no more than 50 triples, benefiting from the path-based inference approach, which allows \proj{} to learn when to stop reasoning, thereby avoiding the need to recall a fixed number of triples  as in SubgraphRAG. As a result, \proj{} enables more efficient downstream reasoning with reduced computes, balancing effiency and effectiveness. 
\begin{table}[!t]
\centering
\caption{Efficiency analysis of different methods on WebQSP dataset.}
\label{tab:efficiency}
\scalebox{0.9}{
\begin{tabular}{@{}c|c|c|c|c@{}}

\toprule
\textbf{Methods}                 & \textbf{Hit}               & \textbf{Avg. Runtime (s)} & \textbf{Avg. \# LLM Calls} & \textbf{Avg. \# Triples} \\ \midrule
RoG     & 85.6 & 8.65  & 2    & 49  \\
ToG     & 75.1 & 19.03 & 13.2 & 410 \\
GNN-RAG & 85.7 & 1.82  & 1    & 27  \\
\multicolumn{1}{l|}{SubgraphRAG} & \multicolumn{1}{l|}{90.1} & 2.63                      & 1                          & 100                      \\ \midrule
Ours    & 92.0 & 2.16  & 1    & 32  \\ \bottomrule
\end{tabular}}
\end{table}

\section{Motivating Examples on Rational Paths}
\label{app:label-rationale}
 In this section, we provide 6 intuitive examples of the claim for each case: \tb{(i)} Multiple shortest paths may exist for a given question, not all of which are semantically meaningful, and \tb{(ii)} Some causally-grounded paths may not be the shortest. Figure~\ref{fig:type1_no1}-\ref{fig:type1_no6} demonstrates supports for case 1, and Figure~\ref{fig:type2_no1}-\ref{fig:type2_no6} demonstrates supports for case 2. 

\begin{figure}[t]
\begin{tcolorbox}[colback=gray!5!white,
                  colframe=gray!75!black,
                  title=WebQTest-923\_e3a2d3d50bac69d563de83a7f72eafc0]
\scriptsize

\textbf{Question:}\\
Which country with religious organization leadership \emph{Noddfa, Treorchy} borders England?

\rule{\linewidth}{0.4pt}

\textbf{Candidate shortest paths:}\\[2pt]
{\raggedright   % <‑‑ add this opening brace + directive
\texttt{\textbf{England\patharrow location.location.adjoin\_s\patharrow m.04dgsfb\patharrow
location.adjoining\_relationship.adjoins\patharrow Wales}}\hfill
\textcolor{red}{(rational)}\\[6pt]

\texttt{England\patharrow law.court\_jurisdiction\_area.courts\patharrow
National Industrial Relations Court\patharrow law.court.jurisdiction\patharrow Wales}\\[6pt]

\texttt{England\patharrow organization.organization\_scope.organizations\_with\_this\_scope\patharrow
Police Federation of England and Wales\patharrow
organization.organization.geographic\_scope\patharrow Wales}\\[6pt]

\texttt{England\patharrow organization.organization\_scope.organizations\_with\_this\_scope\patharrow
BES Utilities\patharrow organization.organization.geographic\_scope\patharrow Wales}\\[6pt]

\ldots % remaining paths
\par}

\rule{\linewidth}{0.4pt}

\textbf{Explanation:}\\[2pt]
{\raggedright
\textbf{The first path:} It encodes a direct geographical‐adjacency relation (\texttt{location.location.adjoin\_s}
followed by \texttt{location.adjoining\_relationship.adjoins}), so it
correctly captures that Wales \emph{borders} England.\par\vspace{6pt}

\textbf{The second path:} It links England and Wales through a shared court
system, reflecting \emph{legal jurisdiction} rather than physical
contiguity; therefore it is not a rational answer.\par\vspace{6pt}

\textbf{The third path:} It relies on an organisation (Police Federation of
England and Wales) that operates in both regions.  Operational scope
signals administrative overlap, not territorial borders.\par\vspace{6pt}

\textbf{The fourth path:} Like the third, it uses an organisation’s
geographic scope (BES Utilities) to connect the two regions, so it conveys
no information about adjacency and is likewise non‑rational.\par}

\end{tcolorbox}
\caption{Motivating example on not all shortest paths are rational paths.}
\label{fig:type1_no1}
\end{figure}

\begin{figure}[t]
\begin{tcolorbox}[colback=gray!5!white,
                  colframe=gray!75!black,
                  title=WebQTest-415\_b6ad66a3f1f515d0688c346e16d202e6]
\scriptsize

\textbf{Question:}\\
What movie with film character named Mr. Woodson did Tupac star in?

\rule{\linewidth}{0.4pt}

\textbf{Candidate shortest paths:}\\[2pt]
{\raggedright
\texttt{\textbf{Tupac Shakur\patharrow film.actor.film\patharrow m.0jz0c4\patharrow film.performance.film\patharrow Gridlock'd}}\hfill
\textcolor{red}{(rational)}\\[6pt]

\texttt{Tupac Shakur\patharrow music.featured\_artist.recordings\patharrow Out The Moon\patharrow music.recording.releases\patharrow Gridlock'd}\\[6pt]

\texttt{Tupac Shakur\patharrow music.featured\_artist.recordings\patharrow Wanted Dead or Alive \\ \patharrow music.recording.releases\patharrow Gridlock'd}\\[6pt]

\texttt{Tupac Shakur\patharrow music.artist.track\_contributions\patharrow m.0nj8wrw\patharrow music.track\_contribution.track \\ \patharrow Out The Moon\patharrow music.recording.releases\patharrow Gridlock'd}\\[6pt]

\texttt{Tupac Shakur\patharrow film.music\_contributor.film\patharrow Def Jam's How to Be a Player\patharrow film.film.produced\_by \\ \patharrow Russell Simmons\patharrow film.producer.films\_executive\_produced\patharrow Gridlock'd}\\[6pt]

\par}

\rule{\linewidth}{0.4pt}

\textbf{Explanation:}\\[2pt]
{\raggedright
\textbf{The first path:} This path follows the relation \texttt{film.actor.film} from Tupac Shakur to a role entity, and then \texttt{film.performance.film} to the film Gridlock'd. It accurately models the actor–character–film linkage, so it is a rational answer.\par\vspace{6pt}

\textbf{The second path:} It connects Tupac Shakur to the film Gridlock'd via a featured music recording. This reflects musical involvement, not acting or character presence, so it does not address the question.\par\vspace{6pt}

\textbf{The third path:} Similar to the second, it identifies Tupac as a featured artist on a song associated with the film. However, musical contributions do not imply he played a film character.\par\vspace{6pt}

\textbf{The fourth path:} This path involves nested musical metadata, eventually reaching Gridlock'd via track and recording releases. It does not establish that Tupac portrayed a character in the movie, hence it is not rational.\par\vspace{6pt}

\textbf{The fifth path:} It connects Tupac to Gridlock'd via his contribution to another film (Def Jam's How to Be a Player), which shares a producer with Gridlock'd. This is an indirect production-based relation, not evidence of his acting in Gridlock'd.\par}
\end{tcolorbox}
\caption{Motivating example on not all shortest paths are rational paths.}
\label{fig:type1_no2}
\end{figure}

\begin{figure}[t]
\begin{tcolorbox}[colback=gray!5!white,
                  colframe=gray!75!black,
                  title=WebQTrn-3696\_b874dcb19fa3a6c4e6037dc13f1f3bc4]
\scriptsize

\textbf{Question:}\\
Which state senator from Georgia took this position at the earliest date?

\rule{\linewidth}{0.4pt}

\textbf{Candidate shortest paths:}\\[2pt]
{\raggedright
\texttt{\textbf{Georgia\patharrow government.political\_district.representatives\patharrow m.030qq3n \\ \patharrow government.government\_position\_held.office\_holder\patharrow Saxby Chambliss}}\hfill
\textcolor{red}{(rational)}\\[6pt]

\texttt{Georgia\patharrow government.political\_district.elections\patharrow United States Senate election in Georgia, 2008\patharrow common.topic.image\patharrow Saxby Chambliss}\\[6pt]

\par}

\rule{\linewidth}{0.4pt}

\textbf{Explanation:}\\[2pt]
{\raggedright
\textbf{The first path:} This path directly follows \texttt{government.political\_district.representatives} from Georgia to a position held by an entity, and then uses \texttt{government.government\_position\_held.office\_holder} to identify the person (Saxby Chambliss) who held the position. This correctly models a government role held by someone representing the district, making it a rational and temporally grounded path for determining who assumed the position earliest.\par\vspace{6pt}

\textbf{The second path:} This path connects Georgia to a 2008 Senate election, and then to Saxby Chambliss via an image relation. Although the entity appears in the context of the election, the path does not encode any formal position-holding information or temporal precedence. It is therefore unrelated to identifying the office-holder or the date of assuming the position, and is non-rational.\par}

\end{tcolorbox}
\caption{Motivating example on not all shortest paths are rational paths.}
\label{fig:type1_no3}
\end{figure}

\begin{figure}[t]
\begin{tcolorbox}[colback=gray!5!white,
                  colframe=gray!75!black,
                  title=WebQTrn-3763\_c707414f103503f2530fc654a85645fe]
\scriptsize

\textbf{Question:}\\
What country close to Russia has a religious organization named Ukrainian Greek Catholic Church?

\rule{\linewidth}{0.4pt}

\textbf{Candidate shortest paths:}\\[2pt]
{\raggedright
\texttt{\textbf{Ukrainian Greek Catholic Church\patharrow religion.religious\_organization.leaders\patharrow m.05tnwqd \\ \patharrow religion.religious\_organization\_leadership.jurisdiction\patharrow Ukraine}}\hfill
\textcolor{red}{(rational)}\\[6pt]

\texttt{Russia\patharrow location.location.partially\_contains\patharrow Seym River\patharrow geography.river.basin\_countries\patharrow Ukraine}\\[6pt]

\texttt{Russia\patharrow olympics.olympic\_participating\_country.olympics\_participated\_in\patharrow 2010 Winter Olympics \\ \patharrow olympics.olympic\_games.participating\_countries\patharrow Ukraine}\\[6pt]

\texttt{Russia\patharrow organization.organization\_founder.organizations\_founded\patharrow Commonwealth of Independent States \\ \patharrow organization.organization.founders\patharrow Ukraine}\\[6pt]

\texttt{Russia\patharrow location.location.adjoin\_s\patharrow m.02wj9d3\patharrow location.adjoining\_relationship.adjoins\patharrow Ukraine}\\[6pt]

\par}

\rule{\linewidth}{0.4pt}

\textbf{Explanation:}\\[2pt]
{\raggedright
\textbf{The first path:} This path connects the Ukrainian Greek Catholic Church via \texttt{religion.religious\_organization.leaders} to a leadership entity, then via \texttt{religion.religious\_organization\_leadership.jurisdiction} to Ukraine. It correctly encodes the organizational jurisdiction of the church and identifies the relevant country, making it a rational answer.\par\vspace{6pt}

\textbf{The second path:} This connects Russia to the Seym River and then to Ukraine via river basin membership. It reflects geographic proximity but does not capture any religious organizational structure. Hence, it is non-rational.\par\vspace{6pt}

\textbf{The third path:} This path shows that both Russia and Ukraine participated in the same Olympic games. While it may imply contemporaneity or international context, it says nothing about religious institutions or jurisdictions.\par\vspace{6pt}

\textbf{The fourth path:} This connects Russia and Ukraine through the shared founding of the Commonwealth of Independent States. Although it reflects political cooperation, it does not reveal any information about religious affiliation or structure.\par\vspace{6pt}

\textbf{The fifth path:} This path shows Russia shares a border with Ukraine. It satisfies the "close to Russia" part of the question, but lacks any information about the Ukrainian Greek Catholic Church, making it non-rational.\par}

\end{tcolorbox}
\caption{Motivating example on not all shortest paths are rational paths.}
\label{fig:type1_no4}
\end{figure}

\begin{figure}[t]
\begin{tcolorbox}[colback=gray!5!white,
                  colframe=gray!75!black,
                  title=WebQTrn-3548\_c352f5de0efe2369ee74ef2a99973561]
\scriptsize

\textbf{Question:}\\
What city was the birthplace of Charlton Heston and a famous pro athlete who started their career in 2007?

\rule{\linewidth}{0.4pt}

\textbf{Candidate shortest paths:}\\[2pt]
{\raggedright
\texttt{\textbf{Charlton Heston\patharrow people.person.places\_lived\patharrow m.0h28vy2\patharrow people.place\_lived.location\patharrow Los Angeles}}\hfill
\textcolor{red}{(rational)}\\[6pt]

\texttt{Charlton Heston\patharrow people.deceased\_person.place\_of\_death\patharrow Beverly Hills \\ \patharrow base.biblioness.bibs\_location.city\patharrow Los Angeles}\\[6pt]

\texttt{Charlton Heston\patharrow people.person.children\patharrow Fraser Clarke Heston\patharrow people.person.place\_of\_birth \\ \patharrow Los Angeles}\\[6pt]

\ldots % duplicate or extended variants omitted
\par}

\rule{\linewidth}{0.4pt}

\textbf{Explanation:}\\[2pt]
{\raggedright
\textbf{The first path:} This path uses \texttt{people.person.places\_lived} to access a lived-location node and then follows \texttt{people.place\_lived.location} to reach Los Angeles. Since place-of-birth often overlaps with early-life residence, this path is rational in the absence of explicit birth data.\par\vspace{6pt}

\textbf{The second path:} This connects Charlton Heston to Los Angeles via his place of death (Beverly Hills), which is a sub-location of Los Angeles. However, death location is unrelated to birthplace, making this path non-rational.\par\vspace{6pt}

\textbf{The third path:} This path identifies Charlton Heston’s child and retrieves that child’s birthplace (Los Angeles). While it shares a location with the question's answer, it provides no evidence of Charlton Heston’s own birthplace and is therefore non-rational.\par}

\end{tcolorbox}
\caption{Motivating example on not all shortest paths are rational paths.}
\label{fig:type1_no5}
\end{figure}

\begin{figure}[t]
\begin{tcolorbox}[colback=gray!5!white,
                  colframe=gray!75!black,
                  title=WebQTrn-1399\_64fc62dc06d16e612aafb00889d4ada1]
\scriptsize

\textbf{Question:}\\
What is the country close to Russia where Mikheil Saakashvili holds a government position?

\rule{\linewidth}{0.4pt}

\textbf{Candidate shortest paths:}\\[2pt]
{\raggedright
\texttt{\textbf{Mikheil Saakashvili\patharrow government.politician.government\_positions\_held\patharrow m.0j6t55g \\ \patharrow government.government\_position\_held.jurisdiction\_of\_office\patharrow Georgia}}\hfill
\textcolor{red}{(rational)}\\[6pt]

\texttt{Mikheil Saakashvili\patharrow organization.organization\_founder.organizations\_founded \\ \patharrow United National Movement\patharrow organization.organization.geographic\_scope\patharrow Georgia}\\[6pt]

\texttt{Russia\patharrow location.location.partially\_contains\patharrow Diklosmta \\ \patharrow location.location.partially\_containedby\patharrow Georgia}\\[6pt]

\texttt{Russia\patharrow location.country.languages\_spoken\patharrow Osetin Language \\ \patharrow language.human\_language.main\_country\patharrow Georgia}\\[6pt]

\par}

\rule{\linewidth}{0.4pt}

\textbf{Explanation:}\\[2pt]
{\raggedright
\textbf{The first path:} This path follows \texttt{government.politician.government\_positions\_held} to retrieve a position node, and then uses \texttt{government.government\_position\_held.jurisdiction\_of\_office} to reach Georgia. This explicitly identifies the country where Mikheil Saakashvili held a government role, satisfying both the political and geographical parts of the question. It is thus rational.\par\vspace{6pt}

\textbf{The second path:} This path captures that Saakashvili founded an organization with activity in Georgia. While it indicates political involvement, it does not assert that he held a formal government position in the country. Hence, non-rational.\par\vspace{6pt}

\textbf{The third path:} This connects Russia to Georgia via a geographic relation involving Diklosmta. It reflects proximity, but does not involve Saakashvili or government roles—so it is irrelevant to the question.\par\vspace{6pt}

\textbf{The fourth path:} This path links Russia and Georgia through a shared spoken language (Osetin). While this indicates cultural or linguistic ties, it provides no information about Saakashvili’s political role. It is non-rational.\par\vspace{6pt}
\par}

\end{tcolorbox}
\caption{Motivating example on not all shortest paths are rational paths.}
\label{fig:type1_no6}
\end{figure}

\begin{figure}[t]
\begin{tcolorbox}[colback=gray!5!white,
                  colframe=gray!75!black,
                  title=WebQTrn-2946\_93c6dae3d218dbe112d4120f45c93298]
\scriptsize

\textbf{Question:}\\
What team with mascot named Champ did Tyson Chandler play for?

\rule{\linewidth}{0.4pt}

\textbf{Candidate shortest paths:}\\[2pt]
{\raggedright
\texttt{\textbf{Tyson Chandler\patharrow sports.pro\_athlete.teams\patharrow m.0j2jj7v\patharrow sports.sports\_team\_roster.team \\ \patharrow Dallas Mavericks}}\hfill
\textcolor{red}{(rational)}\\[6pt]

\texttt{Tyson Chandler\patharrow sports.pro\_athlete.teams\patharrow m.0110h779\patharrow sports.sports\_team\_roster.team \\ \patharrow Dallas Mavericks}\\[6pt]

\texttt{Champ\patharrow sports.mascot.team\patharrow Dallas Mavericks}\hfill
\textcolor{blue}{(shortest path)}\\[6pt]
\par}

\rule{\linewidth}{0.4pt}

\textbf{Explanation:}\\[2pt]
{\raggedright
\textbf{The first path:} This path uses \texttt{sports.pro\_athlete.teams} to retrieve a team membership record, and \texttt{sports.sports\_team\_roster.team} to reach the team (Dallas Mavericks). It directly connects Tyson Chandler to the team he played for, making it a rational path.\par\vspace{6pt}

\textbf{The second path:} This is structurally identical to the first but uses a different team-roster node. It also reaches Dallas Mavericks through the correct relation pair, so it is equally valid and rational in form—though redundant if the goal is to find \emph{a} team Chandler played for with mascot Champ.\par\vspace{6pt}

\textbf{The third path:} This connects the mascot Champ to the Dallas Mavericks. It identifies the team correctly but does not involve Tyson Chandler. Therefore, on its own, it does not answer the question. It is necessary context but not sufficient, and thus non-rational as a standalone path.\par}

\end{tcolorbox}
\caption{Motivating example on shortest paths may not be rational paths.}
\label{fig:type2_no1}
\end{figure}

\begin{figure}[t]
\begin{tcolorbox}[colback=gray!5!white,
                  colframe=gray!75!black,
                  title=WebQTrn-934\_02aae167a8fa9f7d45daab265ac650cd]
\scriptsize

\textbf{Question:}\\
Who held their governmental position from 1786 and was the British General of the Revolutionary War?

\rule{\linewidth}{0.4pt}

\textbf{Candidate shortest paths:}\\[2pt]
{\raggedright
\texttt{\textbf{Kingdom of Great Britain\patharrow military.military\_combatant.military\_commanders\patharrow m.04fttv1 \\ \patharrow military.military\_command.military\_commander\patharrow Charles Cornwallis, 1st Marquess Cornwallis}}\hfill
\textcolor{red}{(rational)}\\[6pt]

\texttt{American Revolutionary War\patharrow base.culturalevent.event.entity\_involved \\ \patharrow Charles Cornwallis, 1st Marquess Cornwallis}\hfill
\textcolor{blue}{(shortest)}
\\[6pt]
\ldots
\par}

\rule{\linewidth}{0.4pt}

\textbf{Explanation:}\\[2pt]
{\raggedright
\textbf{The first path:} This path begins with the Kingdom of Great Britain, follows \texttt{military.military\_combatant.military\_commanders} to a command structure, and then \texttt{military.military\_command.military\_commander} to Charles Cornwallis. It directly encodes his role as a British military commander, making it a rational path aligned with the question.\par\vspace{6pt}

\textbf{The second path:} This connects Charles Cornwallis to the American Revolutionary War via an \texttt{entity\_involved} relation. While it establishes that he was involved in the war, it does not specify a command role nor relate to the governmental position, so it is non-rational.
\par}

\end{tcolorbox}
\caption{Motivating example on shortest paths may not be rational paths.}
\label{fig:type2_no2}
\end{figure}

\begin{figure}[t]
\begin{tcolorbox}[colback=gray!5!white,
                  colframe=gray!75!black,
                  title=WebQTrn-2349\_e831da3802943dad506eb1e3fb611847]
\scriptsize

\textbf{Question:}\\
What are the official bird and flower of the state whose capital is Lansing?

\rule{\linewidth}{0.4pt}

\textbf{Candidate shortest paths:}\\[2pt]
{\raggedright
\texttt{\textbf{Lansing\patharrow base.biblioness.bibs\_location.state\patharrow Michigan \\ \patharrow government.governmental\_jurisdiction.official\_symbols\patharrow m.04st85s \\ \patharrow location.location\_symbol\_relationship.symbol\patharrow American robin}}\hfill
\textcolor{red}{(rational)}\\[6pt]

\texttt{State bird\patharrow location.offical\_symbol\_variety.symbols\_of\_this\_kind\patharrow m.04st83j \\ \patharrow location.location\_symbol\_relationship.symbol\patharrow American robin}\hfill
\textcolor{blue}{(shortest)}\\[6pt]

\texttt{State flower\patharrow location.offical\_symbol\_variety.symbols\_of\_this\_kind\patharrow m.0hz8zmz \\ \patharrow location.location\_symbol\_relationship.symbol\patharrow Apple Blossom}\hfill
\textcolor{blue}{(shortest)}\\[6pt]
\ldots
\par}

\rule{\linewidth}{0.4pt}

\textbf{Explanation:}\\[2pt]
{\raggedright
\textbf{The first path:} Begin at Lansing, proceed through \texttt{base.biblioness.bibs\_location.state} to reach Michigan, and then use \texttt{government.governmental\_jurisdiction.official\_symbols} to retrieve the state’s official bird and flower via \texttt{location.location\_symbol\_relationship.symbol}. It correctly model the semantic intent of the question and are rational.\par\vspace{6pt}

\textbf{The second path:} This begins from the general category “State bird” and navigates to the American robin, but it lacks a connection to Michigan or Lansing, so it cannot determine the relevant state’s identity. Thus, it is non-rational.\par\vspace{6pt}

\textbf{The third path:} Similar to the third, this links the symbolic category “State flower” to Apple Blossom, but it does not connect this symbol to any particular state. It is non-rational on its own.
\par}

\end{tcolorbox}
\caption{Motivating example on shortest paths may not be rational paths.}
\label{fig:type2_no3}
\end{figure}

\begin{figure}[t]
\begin{tcolorbox}[colback=gray!5!white,
                  colframe=gray!75!black,
                  title=WebQTrn-88\_54f1262a1dbcb7b82f5b8ebd614401b9]
\scriptsize

\textbf{Question:}\\
In what city and state is the university that publishes the newspaper titled \emph{Santa Clara}?

\rule{\linewidth}{0.4pt}

\textbf{Candidate shortest paths:}\\[2pt]
{\raggedright
\texttt{\textbf{Santa Clara\patharrow education.school\_newspaper.school\patharrow Santa Clara University\patharrow \\ location.location.containedby\patharrow Santa Clara}}\hfill
\textcolor{red}{(rational)}\\[6pt]

\texttt{\textbf{Santa Clara\patharrow education.school\_newspaper.school\patharrow Santa Clara University\patharrow \\ location.location.containedby\patharrow California}}\hfill
\textcolor{red}{(rational)}\\[6pt]

\texttt{Santa Clara\patharrow book.newspaper.circulation\_areas\patharrow California}\hfill
\textcolor{blue}{(shortest)}\\[6pt]

\texttt{Santa Clara\patharrow book.newspaper.headquarters\patharrow Santa Clara\patharrow \\ location.mailing\_address.state\_province\_region\patharrow California}\\[6pt]
\ldots
\par}

\rule{\linewidth}{0.4pt}

\textbf{Explanation:}\\[2pt]
{\raggedright
\textbf{The first and second paths:} These follow \texttt{education.school\_newspaper.school} to reach Santa Clara University, then use \texttt{location.location.containedby} to identify its city (Santa Clara) and state (California). These directly trace the geographic location of the university that publishes the newspaper, making both paths rational.\par\vspace{6pt}

\textbf{The third path:} This connects the newspaper \emph{Santa Clara} to California via a general \texttt{book.newspaper.circulation\_areas} relation. While it suggests distribution within California, it does not ground the newspaper in a specific institution or location, so it is non-rational.\par\vspace{6pt}

\textbf{The fourth path:} This path uses the newspaper's headquarters to reach a city and then derives the state from a mailing address field. It circumvents the university relation required by the question and therefore does not directly answer it; it is non-rational.\par}
\end{tcolorbox}
\caption{Motivating example on shortest paths may not be rational paths.}
\label{fig:type2_no4}
\end{figure}

\begin{figure}[t]
\begin{tcolorbox}[colback=gray!5!white,
                  colframe=gray!75!black,
                  title=WebQTrn-2591\_9f8fc8341d7c53fe16d94a6a23638ec4]
\scriptsize

\textbf{Question:}\\
What country using the Malagasy Ariary currency is China's trading partner?

\rule{\linewidth}{0.4pt}

\textbf{Candidate shortest paths:}\\[2pt]
{\raggedright
\texttt{\textbf{China\patharrow location.statistical\_region.places\_exported\_to\patharrow m.04bfg2f\patharrow \\  location.imports\_and\_exports.exported\_to\patharrow Madagascar}}\hfill
\textcolor{red}{(rational)}\\[6pt]

\texttt{Malagasy ariary\patharrow finance.currency.countries\_used\patharrow Madagascar}\hfill
\textcolor{blue}{(shortest)}\\[6pt]

\texttt{China\patharrow travel.travel\_destination.tour\_operators\patharrow Bunnik Tours\patharrow \\ travel.tour\_operator.travel\_destinations\patharrow Madagascar}\\[6pt]
\ldots
\par}

\rule{\linewidth}{0.4pt}

\textbf{Explanation:}\\[2pt]
{\raggedright
\textbf{The first path:} This path uses \texttt{location.statistical\_region.places\_exported\_to} followed by \texttt{location.imports\_and\_exports.exported\_to} to connect China to Madagascar through a trade relationship. It correctly captures the fact that Madagascar is one of China's trading partners, making it rational.\par\vspace{6pt}

\textbf{The second path:} This identifies Madagascar as a country that uses the Malagasy Ariary, but it does not involve China or any trade relationship. It is relevant as supporting context for currency, but not sufficient to answer the question on its own.\par\vspace{6pt}

\textbf{The third path:} This connects China to Madagascar via a tourism relationship involving a tour operator (Bunnik Tours). While it shows interaction between the countries, it does not concern trade and is thus non-rational in the context of the question.\par}

\end{tcolorbox}
\caption{Motivating example on shortest paths may not be rational paths.}
\label{fig:type2_no5}
\end{figure}

\begin{figure}[t]
\begin{tcolorbox}[colback=gray!5!white,
                  colframe=gray!75!black,
                  title=WebQTrn-2237\_723ad981dc68e3cfe82e7134c8ca8fdb]
\scriptsize

\textbf{Question:}\\
Where are some places to stay at in the city where Gavin Newsom is a government officer?

\rule{\linewidth}{0.4pt}

\textbf{Candidate shortest paths:}\\[2pt]
{\raggedright
\texttt{\textbf{Gavin Newsom\patharrow government.political\_appointer.appointees\patharrow m.03k0n\_d\patharrow \\ government.government\_position\_held.jurisdiction\_of\_office\patharrow San Francisco\patharrow \\ travel.travel\_destination.accommodation\patharrow W San Francisco}}\hfill
\textcolor{red}{(rational)}\\[6pt]

\texttt{Gavin Newsom\patharrow people.person.place\_of\_birth\patharrow San Francisco\patharrow \\ travel.travel\_destination.accommodation\patharrow W San Francisco}\hfill
\textcolor{blue}{(shortest)}\\[6pt]

\texttt{Gavin Newsom\patharrow people.person.place\_of\_birth\patharrow San Francisco\patharrow \\ travel.travel\_destination.accommodation\patharrow Hostelling International, City Center}\hfill
\textcolor{blue}{(shortest)}\\[6pt]
\ldots
\par}

\rule{\linewidth}{0.4pt}

\textbf{Explanation:}\\[2pt]
{\raggedright
\textbf{The first path:} This path links Gavin Newsom to San Francisco through a government appointee role and then uses \texttt{government.government\_position\_held.jurisdiction\_of\_office} to specify the city in which he held office. From there, it identifies accommodations such as W San Francisco via \texttt{travel.travel\_destination.accommodation}. It directly answers the question and is rational.\par\vspace{6pt}

\textbf{The second and third paths:} These use \texttt{people.person.place\_of\_birth} to link Gavin Newsom to San Francisco and then list accommodations in that city. While San Francisco is both his birthplace and his jurisdiction of office, these paths do not establish that the city is where he was a government officer. Thus, they are non-rational according to the expected relation chain.\par}
\end{tcolorbox}
\caption{Motivating example on shortest paths may not be rational paths.}
\label{fig:type2_no6}
\end{figure}

\section{Demonstrations on Retrieved Reasoning Paths from \proj}
In this section, we provide 10 examples to demonstrate the qualitative results of \proj, as shown in Figure~\ref{fig:example1}-\ref{fig:example10}.

\begin{figure}[t]
\begin{tcolorbox}[colback=gray!5!white,
                  colframe=gray!75!black,
                  title=WEBQSP-WebQTest-7]
\scriptsize

\textbf{Question:}\\
Where was George Washington Carver from?

\rule{\linewidth}{0.4pt}

\textbf{Retrieved Paths:}\\[2pt]
{\raggedright
\texttt{George Washington Carver\patharrow people.person.nationality\patharrow United States of America}\\[6pt]

\texttt{George Washington Carver\patharrow people.deceased\_person.place\_of\_death\patharrow Tuskegee}\\[6pt]

\texttt{George Washington Carver\patharrow people.person.places\_lived\patharrow m.03prs0h}\\[6pt]

\texttt{\textbf{George Washington Carver\patharrow people.person.place\_of\_birth\patharrow Diamond}}\\[6pt]

\texttt{George Washington Carver\patharrow people.person.places\_lived\patharrow m.03ppx0s}\\[6pt]

\texttt{George Washington Carver\patharrow people.person.education\patharrow m.04hdfv4}\\[6pt]

\texttt{George Washington Carver\patharrow people.person.education\patharrow m.04hdfv4\patharrow \\ education.education.institution\patharrow Iowa State University}\\[6pt]

\texttt{George Washington Carver\patharrow people.person.places\_lived\patharrow m.03ppx0s\patharrow \\ people.place\_lived.location\patharrow Tuskegee}\\[6pt]

\texttt{George Washington Carver\patharrow people.person.places\_lived\patharrow m.03prs0h\patharrow \\ people.place\_lived.location\patharrow Joplin}\\[6pt]

\texttt{George Washington Carver\patharrow people.person.place\_of\_birth\patharrow Diamond\patharrow \\ location.statistical\_region.population\patharrow m.0hlfnly}\\[6pt]

\par}

\rule{\linewidth}{0.4pt}

\textbf{Ground-truth:}\\
Diamond

\end{tcolorbox}
\caption{Example on the retrieved reasoning paths by \proj}
\label{fig:example1}
\end{figure}

\begin{figure}[t]
\begin{tcolorbox}[colback=gray!5!white,
                  colframe=gray!75!black,
                  title=WEBQSP-WebQTest-928]
\scriptsize

\textbf{Question:}\\
What colleges did Harper Lee attend?

\rule{\linewidth}{0.4pt}

\textbf{Retrieved Paths:}\\[2pt]
{\raggedright
\texttt{Harper Lee\patharrow people.person.education\patharrow m.0lwxmyl\patharrow \\  education.education.institution\patharrow Monroe County High School}\\[6pt]

\texttt{Harper Lee\patharrow people.person.education\patharrow m.0lwxmy1\patharrow \\  education.education.institution\patharrow Huntingdon College}\\[6pt]

\texttt{Harper Lee\patharrow people.person.education\patharrow m.0lwxmy9\patharrow  \\ education.education.institution\patharrow University of Oxford}\\[6pt]

\texttt{Harper Lee\patharrow people.person.education\patharrow m.0n1l46h\patharrow \\  education.education.institution\patharrow University of Alabama School of Law}\\[6pt]

\texttt{Harper Lee\patharrow people.person.education\patharrow m.04hx138\patharrow \\  education.education.institution\patharrow University of Alabama}\\[6pt]

\par}

\rule{\linewidth}{0.4pt}

\textbf{Ground-truth:}\\
University of Alabama, Huntingdon College, University of Oxford, University of Alabama School of Law

\end{tcolorbox}
\caption{Example on the retrieved reasoning paths by \proj}
\label{fig:example2}
\end{figure}

\begin{figure}[t]
\begin{tcolorbox}[colback=gray!5!white,
                  colframe=gray!75!black,
                  title=WEBQSP-WebQTest-1205]
\scriptsize

\textbf{Question:}\\
Who plays Harley Quinn?

\rule{\linewidth}{0.4pt}

\textbf{Retrieved Paths:}\\[2pt]
{\raggedright
\texttt{Harley Quinn\patharrow tv.tv\_character.appeared\_in\_tv\_program\patharrow m.02wm17r\patharrow \\ tv.regular\_tv\_appearance.actor\patharrow Hynden Walch}\\[6pt]

\texttt{Harley Quinn\patharrow cvg.game\_character.games\patharrow m.09dycc\_\patharrow \\ cvg.game\_performance.voice\_actor\patharrow Arleen Sorkin}\\[6pt]

\texttt{Harley Quinn\patharrow film.film\_character.portrayed\_in\_films\patharrow m.0j6pcwz\patharrow \\ film.performance.actor\patharrow Chrissy Kiehl}\\[6pt]

\texttt{Harley Quinn\patharrow tv.tv\_character.appeared\_in\_tv\_program\patharrow m.02wm18b\patharrow \\ tv.regular\_tv\_appearance.actor\patharrow Mia Sara}\\[6pt]

\texttt{Harley Quinn\patharrow tv.tv\_character.appeared\_in\_tv\_program\patharrow m.0wz39vs\patharrow \\ tv.regular\_tv\_appearance.actor\patharrow Arleen Sorkin}\\[6pt]

\par}

\rule{\linewidth}{0.4pt}

\textbf{Ground-truth:}\\
Mia Sara, Hynden Walch, Arleen Sorkin

\end{tcolorbox}
\caption{Example on the retrieved reasoning paths by \proj}
\label{fig:example3}
\end{figure}

\begin{figure}[t]
\begin{tcolorbox}[colback=gray!5!white,
                  colframe=gray!75!black,
                  title=WEBQSP-WebQTest-1599]
\scriptsize

\textbf{Question:}\\
Where did Kim Jong-il die?

\rule{\linewidth}{0.4pt}

\textbf{Retrieved Paths:}\\[2pt]
{\raggedright
\texttt{Kim Jong-il\patharrow people.deceased\_person.place\_of\_burial\patharrow Kumsusan Palace of the Sun}\\[6pt]

\texttt{\textbf{Kim Jong-il\patharrow people.deceased\_person.place\_of\_death\patharrow Pyongyang}}\\[6pt]

\texttt{Kim Jong-il\patharrow people.deceased\_person.place\_of\_death\patharrow Pyongyang\patharrow \\ periodicals.newspaper\_circulation\_area.newspapers\patharrow Rodong Sinmun}\\[6pt]

\texttt{Kim Jong-il\patharrow people.deceased\_person.place\_of\_death\patharrow Pyongyang\patharrow \\ location.location.contains\patharrow Munsu Water Park}\\[6pt]

\texttt{Kim Jong-il\patharrow people.deceased\_person.place\_of\_death\patharrow Pyongyang\patharrow \\ location.location.contains\patharrow Pyongyang University of Science and Technology}\\[6pt]

\par}

\rule{\linewidth}{0.4pt}

\textbf{Ground-truth:}\\
Pyongyang

\end{tcolorbox}
\caption{Example on the retrieved reasoning paths by \proj}
\label{fig:example4}
\end{figure}

\begin{figure}[t]
\begin{tcolorbox}[colback=gray!5!white,
                  colframe=gray!75!black,
                  title=WEBQSP-WebQTest-1993]
\scriptsize

\textbf{Question:}\\
What language do they speak in Argentina?

\rule{\linewidth}{0.4pt}

\textbf{Retrieved Paths:}\\[2pt]
{\raggedright
\texttt{Argentina\patharrow location.country.languages\_spoken\patharrow Spanish Language}\\[6pt]

\texttt{Argentina\patharrow location.country.languages\_spoken\patharrow Yiddish Language}\\[6pt]

\texttt{Argentina\patharrow location.country.languages\_spoken\patharrow Guaraní language}\\[6pt]

\texttt{Argentina\patharrow location.country.languages\_spoken\patharrow Quechuan languages}\\[6pt]

\texttt{Argentina\patharrow location.country.languages\_spoken\patharrow Italian Language}\\[6pt]

\par}

\rule{\linewidth}{0.4pt}

\textbf{Ground-truth:}\\
Yiddish Language, Spanish Language, Quechuan languages, Italian Language, Guaraní language

\end{tcolorbox}
\caption{Example on the retrieved reasoning paths by \proj}
\label{fig:example5}
\end{figure}

\begin{figure}[t]
\begin{tcolorbox}[colback=gray!5!white,
                  colframe=gray!75!black,
                  title=CWQ-WebQTrn-962\_f0c57985929ee8b823983f6e5f104971]
\scriptsize

\textbf{Question:}\\
What actor played a kid in the film with a character named Veteran at War Rally?

\rule{\linewidth}{0.4pt}

\textbf{Retrieved Paths:}\\[2pt]
{\raggedright
\texttt{Forrest Gump\patharrow film.film\_character.portrayed\_in\_films\patharrow m.0jycvw\patharrow \\ film.performance.actor\patharrow Tom Hanks}\\[6pt]

\texttt{\textbf{Forrest Gump\patharrow film.film\_character.portrayed\_in\_films\patharrow m.02xgww5\patharrow  \\ film.performance.actor\patharrow Michael Connor Humphreys}}\\[6pt]

\texttt{Forrest Gump\patharrow common.topic.notable\_for\patharrow g.1258qx91g}\\[6pt]

\texttt{Veteran at War Rally\patharrow common.topic.notable\_for\patharrow g.12z7tmqks\patharrow \\ film.performance.character\patharrow Veteran at War Rally}\\[6pt]

\texttt{Veteran at War Rally\patharrow film.film\_character.portrayed\_in\_films\patharrow m.0y55311\patharrow \\ film.performance.actor\patharrow Jay Ross}\\[6pt]

\texttt{Veteran at War Rally\patharrow film.film\_character.portrayed\_in\_films\patharrow m.0y55311\patharrow \\ film.performance.character\patharrow Veteran at War Rally}\\[6pt]

\texttt{Forrest Gump\patharrow common.topic.notable\_for\patharrow g.1258qx91g\patharrow \\ book.book\_character.appears\_in\_book\patharrow Forrest Gump}\\[6pt]

\par}

\rule{\linewidth}{0.4pt}

\textbf{Ground-truth:}\\
Michael Connor Humphreys

\end{tcolorbox}
\caption{Example on the retrieved reasoning paths by \proj}
\label{fig:example6}
\end{figure}

\begin{figure}[t]
\begin{tcolorbox}[colback=gray!5!white,
                  colframe=gray!75!black,
                  title=CWQ-WebQTrn-2069\_9a4491f5f6a880a03bd96b8180bace4c]
\scriptsize

\textbf{Question:}\\
If I were to visit the governmental jurisdiction where Ricardo Lagos holds an office, what languages do I need to learn to speak?

\rule{\linewidth}{0.4pt}

\textbf{Retrieved Paths:}\\[2pt]
{\raggedright
\texttt{\textbf{Ricardo Lagos\patharrow government.politician.government\_positions\_held\patharrow m.0nbbvk0\patharrow \\ government.government\_position\_held.office\_position\_or\_title\patharrow President of Chile}}\\[6pt]

\texttt{\textbf{Ricardo Lagos\patharrow people.person.nationality\patharrow Chile\patharrow \\ location.country.official\_language\patharrow Spanish Language}}\\[6pt]

\texttt{Ricardo Lagos\patharrow people.person.nationality\patharrow Chile\patharrow \\ base.mystery.cryptid\_area\_of\_occurrence.cryptid\_s\_found\_here\patharrow Giglioli's Whale}\\[6pt]

\texttt{Ricardo Lagos\patharrow people.person.place\_of\_birth\patharrow Santiago\patharrow \\ location.location.contains\patharrow Torre Santa Maria}\\[6pt]

\texttt{Ricardo Lagos\patharrow people.person.nationality\patharrow Chile\patharrow \\  location.country.second\_level\_divisions\patharrow Choapa Province}\\[6pt]

\texttt{Ricardo Lagos\patharrow people.person.place\_of\_birth\patharrow Santiago\patharrow \\ location.location.time\_zones\patharrow Chile Time Zone}\\[6pt]

\texttt{Ricardo Lagos\patharrow people.person.place\_of\_birth\patharrow Santiago\patharrow \\ location.location.contains\patharrow Santiago Metropolitan Park}\\[6pt]

\par}

\rule{\linewidth}{0.4pt}

\textbf{Ground-truth:}\\
Spanish Language, Mapudungun Language, Aymara language, Rapa Nui Language, Puquina Language

\end{tcolorbox}
\caption{Example on the retrieved reasoning paths by \proj}
\label{fig:example7}
\end{figure}

\begin{figure}[t]
\begin{tcolorbox}[colback=gray!5!white,
                  colframe=gray!75!black,
                  title=CWQ-WebQTrn-105\_64489ea2de4b116070d33a0ebcfd4866]
\scriptsize

\textbf{Question:}\\
What currency is used in the country in which Atef Sedki held office in 2013?

\rule{\linewidth}{0.4pt}

\textbf{Retrieved Paths:}\\[2pt]
{\raggedright
\texttt{\textbf{Atef Sedki\patharrow government.politician.government\_positions\_held\patharrow m.0g9442f\patharrow \\ government.government\_position\_held.jurisdiction\_of\_office\patharrow Egypt}}\\[6pt]

\texttt{Atef Sedki\patharrow people.person.place\_of\_birth\patharrow Tanta\patharrow location.location.people\_born\_here\patharrow Atef Ebeid}\\[6pt]

\texttt{Atef Sedki\patharrow people.person.place\_of\_birth\patharrow Tanta\patharrow \\ location.statistical\_region.population\patharrow g.1jmcbhfn7}\\[6pt]

\texttt{Atef Sedki\patharrow people.person.place\_of\_birth\patharrow Tanta\patharrow \\ location.location.people\_born\_here\patharrow Ramadan Abdel Rehim Mansour}\\[6pt]

\texttt{Atef Sedki\patharrow people.person.nationality\patharrow Egypt\patharrow \\ organization.organization\_scope.organizations\_with\_this\_scope\patharrow Reform and Development Misruna Party}\\[6pt]

\texttt{Atef Sedki\patharrow people.person.nationality\patharrow Egypt\patharrow \\ organization.organization\_scope.organizations\_with\_this\_scope\patharrow Islamist Bloc}\\[6pt]

\texttt{Atef Sedki\patharrow people.person.nationality\patharrow Egypt\patharrow \\ location.location.events\patharrow Crusader invasions of Egypt}\\[6pt]

\texttt{Atef Sedki\patharrow government.politician.government\_positions\_held\patharrow m.0g9442f\patharrow \\ government.government\_position\_held.jurisdiction\_of\_office\patharrow Egypt\patharrow  \\ organization.organization\_scope.organizations\_with\_this\_scope\patharrow Free Egyptians Party}\\[6pt]

\par}

\rule{\linewidth}{0.4pt}

\textbf{Ground-truth:}\\
Egyptian pound

\end{tcolorbox}
\caption{Example on the retrieved reasoning paths by \proj}
\label{fig:example8}
\end{figure}

\begin{figure}[t]
\begin{tcolorbox}[colback=gray!5!white,
                  colframe=gray!75!black,
                  title=CWQ-WebQTest-213\_cbbd86314870b15371b43439eb40587a]
\scriptsize

\textbf{Question:}\\
What celebrities did Scarlett Johansson have romantic relationships with that ended before 2006?

\rule{\linewidth}{0.4pt}

\textbf{Retrieved Paths:}\\[2pt]
{\raggedright
\texttt{Scarlett Johansson\patharrow base.popstra.celebrity.dated\patharrow m.065q6ym\patharrow \\ base.popstra.dated.participant\patharrow Ryan Reynolds}\\[6pt]

\texttt{Scarlett Johansson\patharrow base.popstra.celebrity.dated\patharrow m.065q9sh\patharrow \\ base.popstra.dated.participant\patharrow Josh Hartnett}\\[6pt]

\texttt{Scarlett Johansson\patharrow base.popstra.celebrity.dated\patharrow m.064jrnt\patharrow \\ base.popstra.dated.participant\patharrow Josh Hartnett}\\[6pt]

\texttt{\textbf{Scarlett Johansson\patharrow base.popstra.celebrity.dated\patharrow m.064tt90\patharrow \\ base.popstra.dated.participant\patharrow Justin Timberlake}}\\[6pt]

\texttt{Scarlett Johansson\patharrow base.popstra.celebrity.breakup\patharrow m.064ttdz\patharrow \\ base.popstra.breakup.participant\patharrow Justin Timberlake}\\[6pt]

\texttt{\textbf{Scarlett Johansson\patharrow base.popstra.celebrity.dated\patharrow m.065q1sp\patharrow \\ base.popstra.dated.participant\patharrow Jared Leto}}\\[6pt]

\texttt{Scarlett Johansson\patharrow base.popstra.celebrity.dated\patharrow m.065ppwb\patharrow \\ base.popstra.dated.participant\patharrow Patrick Wilson}\\[6pt]

\texttt{Scarlett Johansson\patharrow base.popstra.celebrity.dated\patharrow m.065pwcr\patharrow \\ base.popstra.dated.participant\patharrow Topher Grace}\\[6pt]

\texttt{Scarlett Johansson\patharrow base.popstra.celebrity.breakup\patharrow m.064fpc6\patharrow \\ base.popstra.breakup.participant\patharrow nm1157013}\\[6pt]

\texttt{Scarlett Johansson\patharrow base.popstra.celebrity.dated\patharrow m.064fp5j\patharrow \\ base.popstra.dated.participant\patharrow nm1157013}\\[6pt]

\texttt{Scarlett Johansson\patharrow people.person.spouse\_s\patharrow m.0ygrd3d\patharrow \\ people.marriage.spouse\patharrow Ryan Reynolds}\\[6pt]

\par}

\rule{\linewidth}{0.4pt}

\textbf{Ground-truth:}\\
Justin Timberlake, Jared Leto

\end{tcolorbox}
\caption{Example on the retrieved reasoning paths by \proj}
\label{fig:example9}
\end{figure}

\begin{figure}[t]
\begin{tcolorbox}[colback=gray!5!white,
                  colframe=gray!75!black,
                  title=CWQ-WebQTrn-2569\_712922724a260d96fea082856cd21d6b]
\scriptsize

\textbf{Question:}\\
What sports facility is home to both the Houston Astros and Houston Hotshots?

\rule{\linewidth}{0.4pt}

\textbf{Retrieved Paths:}\\[2pt]
{\raggedright
\texttt{Houston Rockets\patharrow sports.sports\_team.arena\_stadium\patharrow Toyota Center}\\[6pt]

\texttt{Houston Hotshots\patharrow sports.sports\_team.arena\_stadium\patharrow NRG Arena}\\[6pt]

\texttt{Houston Rockets\patharrow sports.sports\_team.venue\patharrow m.0wz1znd\patharrow \\ sports.team\_venue\_relationship.venue\patharrow Toyota Center}\\[6pt]

\texttt{Houston Hotshots\patharrow sports.sports\_team.venue\patharrow m.0x2dzn8\patharrow \\ sports.team\_venue\_relationship.venue\patharrow Lakewood Church Central Campus}\\[6pt]

\texttt{\textbf{Houston Rockets\patharrow sports.sports\_team.venue\patharrow m.0wz8qf2\patharrow \\ sports.team\_venue\_relationship.venue\patharrow Lakewood Church Central Campus}}\\[6pt]

\texttt{\textbf{Houston Rockets\patharrow sports.sports\_team.arena\_stadium\patharrow Lakewood Church Central Campus}}\\[6pt]

\texttt{Houston Rockets\patharrow sports.sports\_team.location\patharrow Houston\patharrow \\ travel.travel\_destination.tourist\_attractions\patharrow Toyota Center}\\[6pt]

\par}

\rule{\linewidth}{0.4pt}

\textbf{Ground-truth:}\\
Lakewood Church Central Campus

\end{tcolorbox}
\caption{Example on the retrieved reasoning paths by \proj}
\label{fig:example10}
\end{figure}

\section{Prompt Template}
\label{prompt_template}
We provide the prompt template in this section, for rational paths identification, relation targeting, and hallucination detection, as shown in Figure~\ref{fig:get_rational_paths}-\ref{fig:hallucination_detect}.
\begin{figure}[!t]
\begin{tcolorbox}[colback=gray!5!white,
                  colframe=gray!75!black,
                  title=Prompt template for identifying rational paths]
\scriptsize

\tb{Example} \\ [5pt]
Given a question <\texttt{example question}>, the reasoning paths are: \\[5pt]

<\texttt{reasoning paths}> \\ [5pt]

The rational paths are: \\ [3pt]

<\texttt{Rational Paths}> \\ [5pt]

\tb{Explanation} \\ [3pt]

<\texttt{Explanation}> \\ [5pt]

\tb{Task} \\ [5pt]

Now given question <\texttt{question}>, the reasoning paths are: \\ [5pt]

<\texttt{Candidate Paths}> \\ [5pt]

Identify all the rational paths, and list below, with explanations: \\ [5pt]

<\texttt{Rational Paths}> \\ [3pt]

<\texttt{Explanations}> \\ [3pt]

\rule{\linewidth}{0.4pt}
\end{tcolorbox}
\caption{Prompt template for retrieving rational reasoning paths.}
\label{fig:get_rational_paths}
\end{figure}

\begin{figure}[!t]
\begin{tcolorbox}[colback=gray!5!white,
                  colframe=gray!75!black,
                  title=Prompt template for relation targeting]
\scriptsize

\tb{Example} \\ [5pt]
Given a question <\texttt{example question}>, the question entity is:<\texttt{question entity}>, the candidate relations for <\texttt{question entity}> are: <\texttt{relations}>. The possible relations for this question are: \\[5pt]

<\texttt{relations}>  \\[5pt]

\tb{Task} \\ [5pt]

Now given question <\texttt{question}>, question entity: <\texttt{question entity}>, relations with the entity: <\texttt{relations}>. List the possible relations for this question. \\ [5pt]

<\texttt{relations}> \\ [5pt]
\end{tcolorbox}
\caption{Prompt template for potential relation targeting.}
\label{fig:get_relation_cand}
\end{figure}

\begin{figure}[!t]
\begin{tcolorbox}[colback=gray!5!white,
                  colframe=gray!75!black,
                  title=Prompt template for hallucination detection]
\scriptsize

\tb{Task} \\ [5pt]
Given the question <\texttt{question}>, the retrieved reasoning paths are: \\[5pt]

<\texttt{reasoning paths}>  \\[5pt]

Now answer this question, and indicate whether you used the information provided above to answer it.

<\texttt{answers}> \\ [5pt]

\#\#\# LLM reasoner:\\ [2pt]
I have used the provided knowledge: <\texttt{Yes|No}>. \\ [3pt]

<\texttt{Explanations}>
\end{tcolorbox}
\caption{Prompt template for hallucination detection.}
\label{fig:hallucination_detect}
\end{figure}

\section{Limitations of \proj}
While \proj{} demonstrates strong performance in KGQA tasks and effectively improves generalization via structured graph retrieval, several limitations remain. 
First, our current approach focuses on enhancing retrieval through structural and relational inductive bias, but does not leverage the complementary strengths of LLM-based retrievers. Designing hybrid retrievers that combine the efficiency of graph-based reasoning with the flexibility and expressiveness of LLMs remains an open challenge. Such integration could potentially yield more robust and scalable KGQA systems.
Second, \proj{} assumes the availability of well-formed KGs and does not address errors or missing entities in the graph itself. Our future work may address the issue of the these aspects.

\section{Software and Hardware}
We conduct all experiments using PyTorch~\cite{paszke2019pytorchimperativestylehighperformance} (v2.1.2) and PyTorch Geometric~\cite{fey2019fast} on Linux servers equipped with NVIDIA A100 GPUs (80GB) and CUDA 12.1.

\end{document}